\newcommand{\bqn}{\begin{eqnarray}}
\newcommand{\eqn}{\end{eqnarray}}
\newcommand{\bq}{\begin{eqnarray*}}
\newcommand{\eq}{\end{eqnarray*}}
\theoremstyle{definition}
\theoremstyle{plain}
\newtheorem{theorem}{Theorem}
\begin{document}
\bstctlcite{IEEEexample:BSTcontrol}

\title{Fast Polynomial Approximation of Heat Kernel Convolution on Manifolds and Its Application to Brain Sulcal and Gyral Graph Pattern Analyis}
\author{Shih-Gu Huang, Ilwoo Lyu, Anqi Qiu, and Moo K. Chung
\thanks{Copyright (c) 2019 IEEE. Personal use of this material is permitted. However, permission to use this material for any other purposes must be obtained from the IEEE by sending a request to pubs-permissions@ieee.org.} 
\thanks{This work was supported by NIH Grant R01 EB022856 (Correspondence author: Moo Chung. e-mail: mkchung@wisc.edu)}
\thanks{
S.-G. Huang and M.K. Chung are with the Department of Biostatistics and Medical Informatics, University of Wisconsin, Madison, WI 53706, USA. }
\thanks{I. Lyu is with the Department of Electrical Engineering and Computer Science, Vanderbilt University, Nashville, TN 37235, USA} 
\thanks{A. Qiu is with the Department of Biomedical Engineering, National University of Singapore, Singapore 117583}
}

\maketitle

\begin{abstract}
Heat diffusion has been widely used in brain imaging for surface fairing, mesh regularization and  cortical data smoothing. Motivated by diffusion wavelets and convolutional neural networks on graphs, we present a new fast and accurate numerical scheme to solve heat diffusion on surface meshes. This is achieved by approximating the heat kernel convolution using high degree orthogonal polynomials in the spectral domain. We also derive the closed-form expression of the spectral decomposition of the Laplace-Beltrami operator and use it to solve heat diffusion on a manifold for the first time.
The proposed fast polynomial approximation scheme avoids solving for the eigenfunctions of the Laplace-Beltrami operator, which is computationally costly for large mesh size, and the numerical instability associated with the finite element method based diffusion solvers. 
The proposed method  is applied in localizing the male and female differences in cortical sulcal and gyral graph patterns obtained from MRI in an innovative way. The MATLAB code is available at \url{http://www.stat.wisc.edu/~mchung/chebyshev}.
\end{abstract}

\begin{IEEEkeywords}
Heat diffusion, Laplace-Beltrami operator, brain cortical sulcal curves, diffusion wavelets, Chebyshev polynomials.
\end{IEEEkeywords}

\section{Introduction}
\IEEEPARstart{H}{eat} diffusion has been widely used in brain image processing as a form of smoothing and noise reduction starting with  Perona and Malik's ground-breaking study \cite{perona.1990}. 
Many techniques have been developed for surface mesh fairing, regularization \cite{sochen.1998,malladi.2002} and surface data smoothing \cite{andrade.2001,cachia.TMI.2003,chung.2001.diffusion,joshi.2009}. 
The diffusion equation has been solved by various numerical techniques \cite{andrade.2001,chung.2001.diffusion,cachia.TMI.2003,seo.2010.MICCAI,chung.2015.MIA}.
In \cite{chung.2001.diffusion,chung.2003.cvpr,chung.2004.ISBI}, the heat diffusion was solved iteratively by the discrete estimate of the LB-operator using  the finite element method (FEM) and the FDM. However, the FDM are known to suffer numerical instability if the sufficiently small step size is not chosen in the forward Euler scheme. In \cite{qiu.2006,reuter.2009.cad,seo.2010.MICCAI,chung.2015.MIA}, diffusion was solved by expanding the heat kernel as a series expansion 
of the LB-eigenfunctions. Although the LB-eigenfunction approach avoids the numerical instability associated with the FEM based diffusion solver  \cite{chung.2003.cvpr},  the computational complexity of computing eigenfunctions is very high for large-scale surface meshes. 

In this paper, motivated by the diffusion wavelet transform \cite{hammond.2011,coifman2006diffusion,kim2012wavelet,tan.2015,donnat2018learning} and convolutional neural networks \cite{defferrard2016convolutional} on graphs that all use Chebyshev polynomials, we propose a new spectral method to solve the heat diffusion by approximating  the heat kernel by orthogonal polynomials. The previous works did the spectral decomposition on mostly graph Laplacian exclusively using Chebyshev polynomials. The LB-operator with other polynomials were not considered before. We present a new general theory for the LB-operator on an arbitrary manifold that works with an arbitrary orthogonal polynomial. Besides the Chebyshev polynomials, we provide three other polynomials to show the generality of the proposed method. We further derive the closed-form expression of the spectral decomposition of the LB-operator and use it to solve heat diffusion on a manifold for the first time.
Taking the advantage of the recurrence relations of orthogonal polynomials \cite{chihara2011introduction,freud2014orthogonal,olver2010nist}, the computational run time  of the  proposed method  is 
significantly reduced.
The proposed method is 
faster than the LB-eigenfunction approach and FEM based diffusion solvers \cite{chung.2015.MIA}.  We further applied the fast polynomial approximation method to iterative convolution to obtain multiscale features, which is shown to be as good as  the diffusion wavelet  in detecting localized surface signals \cite{coifman2006diffusion,hammond.2011,kim2012wavelet,tan.2015,donnat2018learning}.

The proposed method is applied in quantifying brain sulcal and gyral patterns. The sulcal and gyral features such as gyrification index, sulcal depth, curvature, sulcal length and sulcal area  were widely used in revealing significant differences between populations \cite{im2019sulcal}. In \cite{ochiai2004sulcal}, the difference of the superior temporal sulcus length was analyzed. \cite{shi2017conformal} computed the lengths of sulcal curves in characterizing the Alzheimer's disease  (AD). \cite{seong2010automatic} measured the sulcal depth and average mean curvature along the sulcal lines in the AD study. In \cite{meng2016discovering}, various metrics were proposed to measure the difference between sulcal graph features including sulcal pits, sulcal basins and ridge points.
\cite{lyu2018cortical} computed local gyrification index using shape-adaptive kernels by performing  wavefront propagation with sulcal and gyral  curves as the source.\cite{im2011quantitative} measured the similarity between two sulcal graphs.

The main contributions of the paper are as follows. 1) The development of a general polynomial approximation theory for 
LB-operator and heat kernel and its application to solving diffusion equations fast. The derivation of the closed-form solutions of the expansion that enables faster computation of heat diffusion than before. 2) New multiscale shape analysis framework  on manifolds that utilizes the iterative heat kernel convolution property that is as powerful as diffusion wavelets. 3) Application of the faster solver in quantifying the sulcal and gyral patterns on the large-scale brain surface meshes with 370,000 vertices for 444 subjects obtained from 3T MRI.  We use the proposed method in performing diffusion on cortical brain surfaces by taking the sulcal and gyral graph patterns as the initial condition. The dataset is large enough to demonstrate the effectiveness of our faster solver. Our fast solver can perform diffusion in 40 minutes for the whole dataset. The male and female differences  are then localized using both mass univariate and multivariate statistics.

\section{Methods}

We present a new general spectral theory for diffusion equations and the heat kernel using four different types of polynomials (Jacobi, Chebyshev, Hermite, Laguerre) to show the generality of  the method. The analytic closed-form solutions to the expansion coefficients are derived and used to solve the heat diffusion fast. The new theory works for an arbitrary orthogonal polynomial.

\subsection{Diffusion on Manifolds}
Let functional data $f \in L^2(\mathcal{M})$, the space of square integrable functions on manifold
$\mathcal{M}$ with inner product 
\begin{equation*}
\langle f,h\rangle=\int_{\cal M}f(p)h(p) d\mu(p),
\end{equation*}
where $\mu(p)$ is the Lebesgue measure such that $\mu({\mathcal M})$ is the total area or volume of $\mathcal M$. Let $\Delta$ denote the LB-operator on $\mathcal{M}$. Let $\psi_j$ be the eigenfunctions of the LB-operator with eigenvalues $\lambda_j$, i.e., 
${\Delta}\psi_j=\lambda_j\psi_j$.
Let us order the eigenvalues as $0=\lambda_0 \leq \lambda_1 \leq \lambda_2 \leq
\cdots$.

The isotropic heat diffusion on $\mathcal{M}$ with $f$ as the initial observed data is given by
\begin{equation}
\frac{\partial g(p,\sigma)}{\partial \sigma}+\Delta g=0, \quad 
g(p,\sigma=0)=f(p),
\label{eq:cauchy}
\end{equation}
where $\sigma$ is the diffusion time.
It has been shown that the convolution of $f$ with heat kernel $K_\sigma$ is the unique solution of \eqref{eq:cauchy}
\cite{rosenberg.1997,chung.2005.IPMI,seo.2010.MICCAI,chung.2015.MIA}:
\begin{equation*}
g(p,\sigma)=K_\sigma \ast f (p)=\int_{\mathcal M} K_\sigma(p,q) f(q)d\mu(q),
\end{equation*}
with the heat kernel given by 
\begin{equation}\label{eq:heatkernel}
K_{\sigma}(p,q)=\sum_{j=0}^{\infty} e^{-\lambda_j \sigma}\psi_j(p)\psi_j(q).
\end{equation}
The heat kernel convolution can be written as
\begin{equation}\label{eq:smooth_eig}
g(p,\sigma) = 
K_\sigma \ast f (p)
=\sum_{j=0}^{\infty}e^{-\lambda_j \sigma}f_j\psi_j(p)
\end{equation}
with coefficients $f_j$ computed as 
\begin{equation*}
f_j=\int_{\mathcal{M}} f(p) \psi_j (p) d\mu(p).
\end{equation*}

\subsection{Basic Idea in 1D Diffusion}
We explain the core idea with 1D example. Consider time series data $f$ on $[0,1]$. The solution of heat diffusion \eqref{eq:cauchy} is given by  the weighted cosine series representation \cite{chung.2007.TMI},
\begin{equation}\label{eq:1Dwcsr}
g(p, \sigma) =K_\sigma \ast f =\sum_{j=0}^\infty e^{- j^2\pi^2 \sigma}f_j\psi_j, 
\end{equation}
where $\psi_0(p) =1$ and 
$\psi_j(p) = \sqrt{2} \cos (j\pi p)$ are the eigenfucntions of $\Delta = -\frac{\partial^2}{\partial p^2}$. From Taylor expansion $e^z=\sum_{n=0}^\infty \frac{z^n}{n!}$, 
\begin{equation*}
K_\sigma \ast f=\sum_{n=0}^\infty \frac{(-\sigma)^n}{n!} \sum_{j=0}^\infty f_j ( j^2\pi^2)^n\psi_j.
\end{equation*}
Since $\Delta\psi_j = j^2\pi^2\psi_j$ and $\Delta^n\psi_j=(j^2\pi^2)^n\psi_j$, we have
\begin{equation*}
K_\sigma \ast f =\sum_{n=0}^\infty \frac{(- \sigma)^n}{n!} \Delta^n f.
\end{equation*}
Thus, heat diffusion can be computed simply by using the power of Laplacian. If we can further compute the power of Laplacian quickly using some recursion, the computation can be done more quickly.

\subsection{Fast Polynomial Approximation}
Here we present a general new theory for an arbitrary manifold that works in any type of image domain including surface and volumetric meshes. Consider an  orthogonal polynomial  $P_n$ over  interval $[a,b]$ with inner product 
$\int_{a}^{b} P_n(\lambda) P_k(\lambda) w(\lambda) d \lambda = \delta_{nk},$
the Dirac delta. The weight $w(\lambda)$ differs for polynomials. $P_n$ is often defined using the second order recurrence 
\cite{olver2010nist},
\begin{equation}\label{eq:recurrence}
P_{n+1}(\lambda) = (A_n\lambda +B_n)P_{n} (\lambda) +C_n P_{n-1} (\lambda)
\end{equation}
with initial conditions $P_{-1}(\lambda)=0$ and  $P_0({\lambda})=1$. We expand the exponential weight
of the heat kernel by polynomials $P_n$:
\begin{equation}
e^{-\lambda \sigma}=\sum_{n=0}^\infty c_{\sigma,n}P_n(\lambda), 
c_{\sigma,n}= \int_{a}^{b}  e^{-\lambda \sigma} P_n(\lambda) w(\lambda) d \lambda.
\label{eq:expweight}\end{equation}
Substituting  \eqref{eq:expweight} into  \eqref{eq:smooth_eig}, the solution of heat diffusion can be expressed in terms of the polynomials:
\begin{equation}\label{eq:heat_Pn}
K_\sigma \ast f
=\sum_{n=0}^\infty c_{\sigma,n} \sum_{j=0}^{\infty} P_n(\lambda_j) f_j\psi_j.
\end{equation}
Since ${\Delta} \psi_j=\lambda_j\psi_j$, we have $\Delta^l\psi_j =\lambda^l_j\psi_j.$
Assuming the form $P_n(\lambda)=\sum_{l=0}^nd_l\lambda^l$,  we have
\begin{equation}\label{eq:property1}
P_n(\lambda_j)\psi_j=\sum_{l=0}^nd_l\lambda_j^l\psi_j=\sum_{l=0}^nd_l{\Delta}^l \psi_j=P_n(\Delta)\psi_j.
\end{equation}
By substituting \eqref{eq:property1} into \eqref{eq:heat_Pn}, 
the heat diffusion equation is solved by polynomial expansion
involving the LB-operator but without the LB-eigenfunctions,
\begin{equation*}
K_\sigma \ast f=\sum_{n=0}^\infty c_{\sigma,n}P_n\left(\Delta\right) f.
\end{equation*}
Since $P_n$ is a polynomial of degree $n$, the direct computation of $P_n\left(\Delta\right) f$ requires the costly computation of $\Delta f, \Delta^2 f, \cdots, \Delta^n f$. 
Instead, we compute $P_n\left(\Delta\right) f$ by the following recurrence
\begin{equation*}
P_{n+1}\left(\Delta\right) f = (A_n\Delta +B_n)P_{n}\left(\Delta\right) f + C_n P_{n-1}\left(\Delta\right) f
\end{equation*}
with initial conditions $P_{-1}(\Delta) f=0$ and $P_0(\Delta) f =f$.

In practice, the expansion is truncated at degree $m$, which is empirically determined. The expansion coefficients $c_{\sigma,n}$ can be computed from the closed-form solution to \eqref{eq:expweight}. In the following, we present three examples of the fast polynomial approximation methods based on the Jacobi, Hermite and Laguerre polynomials.

\noindent{\textbf{Jacobi polynomials.}}
The Jacobi polynomials $P_n^{(\alpha,\beta)}(\lambda)$, which are orthogonal in $[-1, 1]$ for $\alpha,\beta>-1$, are defined by the  recurrence \eqref{eq:recurrence} with parameters given by \cite{olver2010nist},
\begin{align*}
A_n&=\frac{(2n+\alpha+\beta+1)(2n+\alpha+\beta+2)}{2(n+1)(n+\alpha+\beta+1)},\nonumber\\
B_n&=\frac{(\alpha^2-\beta^2)(2n+\alpha+\beta+1)}{2(n+1)(n+\alpha+\beta+1)(2n+\alpha+\beta)},\\
C_n&=-\frac{(n+\alpha)(n+\beta)(2n+\alpha+\beta+2)}{(n+1)(n+\alpha+\beta+1)(2n+\alpha+\beta)}.\nonumber
\end{align*}
The Jacobi polynomials are orthogonal over interval $[-1, 1]$ with inner product  \cite{olver2010nist},
\begin{align*}
&\int_{-1}^1 P_{n}^{(\alpha,\beta)}(\lambda)P_{k}^{(\alpha,\beta)}(\lambda)(1-\lambda)^\alpha (1+\lambda)^\beta d\lambda \\
&=\frac{2^{\alpha+\beta+1}\Gamma(n+\alpha+1)\Gamma(n+\beta+1)}{(2n+\alpha+\beta+1)\Gamma(n+\alpha+\beta+1)n!}\delta_{nk}.
\end{align*}
Many polynomials such as Chebyshev, Legendre and Gegenbauer polynomials defined in $[-1,1]$ are the special cases of the Jacobi polynomials \cite{olver2010nist}. 

The eigenvalue $\lambda$ of the LB-operator ranges over 
$[0,\infty)$.
Expanding the exponential weight $e^{-\lambda\sigma}$ by the Jacobi polynomials may not be able to provide a good fit outside the interval $[-1,1]$.
Hence, we shift and scale 
Jacobi polynomials with parameter $b>0$ 
\begin{equation}
 \overline{P}^{(\alpha,\beta)}_n(\lambda)=P^{(\alpha,\beta)}_n\left(\frac{2\lambda}{b}-1\right), \label{eq:Pbar}
 \end{equation}
 which are  orthogonal over $[0,b]$. 
Then,  $e^{-\lambda\sigma}$ is expanded in terms of $\overline{P}^{(\alpha,\beta)}_n$.

\begin{theorem}\label{thm:jacobi}
The Jacobi polynomial expansion of the solution to heat diffusion \eqref{eq:cauchy}  is given by
\begin{equation}
K_{\sigma}* f=\sum_{n=0}^\infty c_{\sigma,n}\overline{P}^{(\alpha,\beta)}_n \left(\Delta\right) f,
\end{equation}
where the coefficients $c_{\sigma,n}$ have the closed-form solution
{\small
\begin{equation*}
c_{\sigma,n}=\frac{\Gamma(\alpha+\beta+n+1)}{\Gamma(\alpha+\beta+2n+1)}
(-b\sigma)^n {}_1 F_1\left( 		\!\!\!
\begin{array}{c}
\beta+n+1\\ 
\alpha+\beta+2n+2\end{array}		\!\!
; -b\sigma \right),
\end{equation*}}
and  $_pF_q$ is the generalized hypergeometric function \cite{olver2010nist}.
\end{theorem}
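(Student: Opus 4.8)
The plan is to compute $c_{\sigma,n}$ directly from its defining integral \eqref{eq:expweight}, adapted to the shifted polynomials $\overline{P}^{(\alpha,\beta)}_n$ of \eqref{eq:Pbar}. By orthogonality over $[0,b]$ the coefficient is $c_{\sigma,n}=\bar h_n^{-1}\int_0^b e^{-\lambda\sigma}\overline{P}^{(\alpha,\beta)}_n(\lambda)\bar w(\lambda)\,d\lambda$, where $\bar w$ is the weight pulled back from $(1-x)^\alpha(1+x)^\beta$ and $\bar h_n$ is the corresponding squared norm. The first step is the change of variables $x=\tfrac{2\lambda}{b}-1$, mapping $\lambda=0,b$ to $x=\mp1$, which turns this into an integral over $[-1,1]$ against the standard Jacobi weight. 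The Jacobian $\tfrac{b}{2}$ appears in both $\bar h_n$ and the numerator and cancels, leaving $c_{\sigma,n}=h_n^{-1}e^{-b\sigma/2}\int_{-1}^1 e^{-(b\sigma/2)x}P^{(\alpha,\beta)}_n(x)(1-x)^\alpha(1+x)^\beta\,dx$, where $h_n$ is the norm from the orthogonality relation quoted before the theorem.

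The core of the argument is the evaluation of this last integral. I would insert the Rodrigues formula $(1-x)^\alpha(1+x)^\beta P^{(\alpha,\beta)}_n(x)=\tfrac{(-1)^n}{2^n n!}\tfrac{d^n}{dx^n}\bigl[(1-x)^{\alpha+n}(1+x)^{\beta+n}\bigr]$ and integrate by parts $n$ times. Because $\alpha,\beta>-1$, each derivative of order below $n$ still carries a positive power of $(1\mp x)$, so every boundary term vanishes at $\pm1$; this is the step I expect to require the most care, since it is exactly where the hypotheses on $\alpha,\beta$ enter. After the integrations by parts the derivatives fall on $e^{-(b\sigma/2)x}$, producing a factor $(-b\sigma/2)^n$ and reducing the problem to the single integral $\int_{-1}^1 e^{-(b\sigma/2)x}(1-x)^{\alpha+n}(1+x)^{\beta+n}\,dx$.

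This remaining integral is standard: the substitution $u=\tfrac{1+x}{2}$ converts it, up to an explicit power of $2$ and a factor $e^{b\sigma/2}$, into $\int_0^1 e^{-b\sigma u}u^{\beta+n}(1-u)^{\alpha+n}\,du$, and matching this against the Kummer integral representation $\int_0^1 e^{zt}t^{p-1}(1-t)^{q-p-1}\,dt=B(p,q-p)\,{}_1F_1(p;q;z)$ identifies it with $B(\beta+n+1,\alpha+n+1)\,{}_1F_1(\beta+n+1;\alpha+\beta+2n+2;-b\sigma)$, thereby producing both the $_1F_1$ and the argument $-b\sigma$ of the statement. The final step is bookkeeping: the factor $e^{b\sigma/2}$ cancels the earlier $e^{-b\sigma/2}$, one expresses $h_n$ and the Beta function through Gamma functions, and uses $\Gamma(\alpha+\beta+2n+2)=(\alpha+\beta+2n+1)\Gamma(\alpha+\beta+2n+1)$. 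Then $\Gamma(\alpha+n+1)$ and $\Gamma(\beta+n+1)$ cancel between $h_n^{-1}$ and the Beta function, the $(2n+\alpha+\beta+1)$ factors cancel, the $n!$ cancels, and the surviving powers of $2$ combine with $(-b\sigma/2)^n$ to give $(-b\sigma)^n$, leaving exactly $c_{\sigma,n}=\tfrac{\Gamma(\alpha+\beta+n+1)}{\Gamma(\alpha+\beta+2n+1)}(-b\sigma)^n\,{}_1F_1(\beta+n+1;\alpha+\beta+2n+2;-b\sigma)$.
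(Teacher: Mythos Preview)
Your proposal is correct. The overall strategy matches the paper's: obtain the Jacobi expansion of $e^{-\lambda\sigma}$ on $[-1,1]$, then transfer it to $[0,b]$ via the substitution $\lambda\mapsto \tfrac{2\lambda}{b}-1$, $\sigma\mapsto \tfrac{b\sigma}{2}$, and cancel the stray $e^{b\sigma/2}$. The difference lies in how the coefficients on $[-1,1]$ are obtained. The paper's proof simply asserts the formula
\[
e^{-\lambda\sigma}=\sum_{n\ge 0}\frac{\Gamma(\alpha+\beta+n+1)}{\Gamma(\alpha+\beta+2n+1)}(-2\sigma)^n e^{\sigma}\,{}_1F_1\!\left(\begin{array}{c}\beta+n+1\\ \alpha+\beta+2n+2\end{array};-2\sigma\right)P_n^{(\alpha,\beta)}(\lambda)
\]
as a known identity (citing \cite{olver2010nist,ismail2005classical}) and then performs the shift. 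You instead derive the coefficients from first principles: Rodrigues' formula plus $n$-fold integration by parts reduces the orthogonality integral to $\int_{-1}^1 e^{-(b\sigma/2)x}(1-x)^{\alpha+n}(1+x)^{\beta+n}\,dx$, which the Euler--Kummer integral identifies with $B(\beta+n+1,\alpha+n+1)\,{}_1F_1(\beta+n+1;\alpha+\beta+2n+2;-b\sigma)$; the Gamma bookkeeping then collapses exactly as you describe. Your route is more self-contained and makes transparent where the hypothesis $\alpha,\beta>-1$ is used (vanishing of the boundary terms), at the cost of a bit more computation; the paper's route is shorter but relies on a tabulated expansion.
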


\begin{proof}
We first derive the expansion of $e^{-\lambda\sigma}$ using the Jacobi polynomials $P^{(\alpha,\beta)}_n$. The algebraic derivation will show that the expansion of $e^{-\lambda\sigma}$  is given by
{\small
\begin{equation*}
e^{-\lambda\sigma}		\!	=	\!
\sum_{n=0}^\infty\gamma_n (-2\sigma)^n e^{\sigma}  {}_1 F_1 
\left(			\!\!\! 
\begin{array}{c}
\beta+n+1\\ 
\alpha+\beta+2n+2\end{array}			\!\!
;-2\sigma 			\!
\right)P^{(\alpha,\beta)}_n(\lambda), 
\end{equation*}}
where $\gamma_n=\frac{\Gamma(\alpha+\beta+n+1)}{\Gamma(\alpha+\beta+2n+1)}$, and $_pF_q$ is the generalized hypergeometric function \cite{olver2010nist,ismail2005classical}.
The expansion is only valid in interval $[-1, 1]$. To obtain the expansion of $e^{-\lambda\sigma}$ in terms of
the shifted and scaled Jacobi polynomial \eqref{eq:Pbar},
we  replace $\lambda$ by $\frac{2\lambda}{b}-1$ and $\sigma$ by $\frac{b\sigma}{2}$ and expand $e^{-\lambda\sigma+\frac{b\sigma}{2}}$ as
\begin{equation*}	
\sum_{n=0}^\infty\gamma_n (-b\sigma)^n e^{\frac{b\sigma}{2}} {}_1 F_1
\left(			\!\!
\begin{array}{c}
\beta+n+1\\ 
\alpha+\beta+2n+2\end{array}			\!\!
;-b\sigma 	
\right) 
\overline{P}^{(\alpha,\beta)}_n(\lambda).
\end{equation*}
We divide the both sides of the equation by $e^{\frac{b\sigma}{2}}$, and the expansion of $e^{-\lambda\sigma}$ follows.
\end{proof}

\noindent{\textbf{Chebyshev polynomials.}}
The Chebyshev polynomials $T_n(\lambda)=\cos(n\cos^{-1}\lambda)$ defined  in interval $[-1,1]$ are the special cases of the Jacobi polynomials \cite{olver2010nist},
\begin{equation}\label{eq:cheby_jacobi}
{T}_n(\lambda)=\frac{4^n(n!)^2}{(2n)!} {P}_n^{(-\frac{1}{2},-\frac{1}{2})}(\lambda).
\end{equation}
The Chebyshev polynomials satisfy the recurrence relation  \eqref{eq:recurrence} with parameters $A_n=2-\delta_{n0}$, $B_n=0$ and $C_n=-1$.
Similar to using the shifted and scaled Jacobi polynomials in Theorem \ref{thm:jacobi},
we shift and scale the Chebyshev polynomials to  
\begin{equation*}
\overline{T}_n(\lambda)=T_n\left(\frac{2\lambda}{b}-1\right)
\end{equation*}
for the expansion of exponential weight over interval $[0,b]$.

\begin{theorem}\label{thm:cheby}
The Chebyshev  polynomial expansion of the solution to heat diffusion  \eqref{eq:cauchy} is given by
\begin{equation*}
K_{\sigma}*f=\sum_{n=0}^\infty c_{\sigma,n} \overline{T}_n\left(\Delta\right) f,
\end{equation*}
where the coefficients $c_{\sigma,n}$ have the closed-form solution
\begin{equation*}
c_{\sigma,n}=(2-\delta_{n0})(-1)^n e^{-	\frac{b\sigma}{2}} I_n\left(\frac{b\sigma}{2}\right),
\end{equation*}
and $I_n$ is the modified Bessel function of the first kind \cite{olver2010nist}.
\end{theorem}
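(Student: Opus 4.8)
The plan is to reduce the theorem to determining the coefficients $c_{\sigma,n}$ in the expansion of the exponential weight $e^{-\lambda\sigma}$ in terms of the shifted-and-scaled Chebyshev polynomials $\overline{T}_n$. Once these coefficients are in hand, the representation $K_\sigma * f = \sum_{n=0}^\infty c_{\sigma,n}\overline{T}_n(\Delta) f$ follows immediately from the general identities \eqref{eq:heat_Pn} and \eqref{eq:property1}, exactly as in the proof of Theorem~\ref{thm:jacobi}; so the entire content of the statement is the closed form for $c_{\sigma,n}$.

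First I would work over the canonical interval $[-1,1]$ and expand $e^{-\lambda\sigma}$ directly, rather than specializing the Jacobi formula. The key classical tool is the generating function of the modified Bessel functions of the first kind, $e^{z\cos\theta} = I_0(z) + 2\sum_{n=1}^\infty I_n(z)\cos(n\theta)$. Writing $\lambda = \cos\theta$ and using $T_n(\cos\theta) = \cos(n\theta)$ turns this into $e^{z\lambda} = \sum_{n=0}^\infty (2-\delta_{n0}) I_n(z)\, T_n(\lambda)$. Setting $z=-\sigma$ and invoking the parity relation $I_n(-\sigma) = (-1)^n I_n(\sigma)$ yields the expansion $e^{-\lambda\sigma} = \sum_{n=0}^\infty (2-\delta_{n0})(-1)^n I_n(\sigma)\, T_n(\lambda)$, valid on $[-1,1]$.

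Next I would transfer this to the spectral interval $[0,b]$ by the same change of variables used for the Jacobi case: replace $\lambda$ by $\frac{2\lambda}{b}-1$ and $\sigma$ by $\frac{b\sigma}{2}$, so that $T_n(\frac{2\lambda}{b}-1) = \overline{T}_n(\lambda)$ and the left-hand side becomes $e^{-\lambda\sigma + b\sigma/2}$. Dividing through by $e^{b\sigma/2}$ produces $e^{-\lambda\sigma} = \sum_{n=0}^\infty (2-\delta_{n0})(-1)^n e^{-b\sigma/2} I_n(\frac{b\sigma}{2})\,\overline{T}_n(\lambda)$, which reads off $c_{\sigma,n}$ as claimed. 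A second, more laborious route is to specialize the Jacobi coefficient of Theorem~\ref{thm:jacobi} to $\alpha=\beta=-\tfrac12$ and rescale by the conversion factor $\frac{(2n)!}{4^n(n!)^2}$ of \eqref{eq:cheby_jacobi}.

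The main obstacle lives in this second route, which is why I would keep the generating-function argument as the primary proof. At $\alpha=\beta=-\tfrac12$ the prefactor $\Gamma(\alpha+\beta+n+1)/\Gamma(\alpha+\beta+2n+1)$ degenerates to $\Gamma(n)/\Gamma(2n)$, singular at $n=0$, so the $n=0$ term must be recovered by a separate limit; and the factor ${}_1F_1(n+\tfrac12;2n+1;-b\sigma)$ must be collapsed to a Bessel function through the Kummer reduction ${}_1F_1(a;2a;z)=e^{z/2}{}_0F_1(;a+\tfrac12;z^2/16)$ together with the series identity $I_n(w) = \frac{(w/2)^n}{\Gamma(n+1)}{}_0F_1(;n+1;w^2/4)$. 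Checking that the Gamma-function bookkeeping reproduces precisely the weight $(2-\delta_{n0})$, including the correct value at $n=0$, is the one genuinely delicate point; the generating-function derivation supplies that factor automatically and avoids the singular limit entirely.
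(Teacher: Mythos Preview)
Your proposal is correct and coincides with the paper's own proof: the paper likewise gives two arguments, the generating-function identity $e^{z\cos\theta}=I_0(z)+2\sum_{n\ge1}I_n(z)\cos(n\theta)$ followed by the shift $\lambda\mapsto \tfrac{2\lambda}{b}-1$, and the specialization of Theorem~\ref{thm:jacobi} to $\alpha=\beta=-\tfrac12$ together with the Kummer--Bessel identity $I_n(z)=\tfrac{z^n e^{\pm z}}{2^n n!}\,{}_1F_1(n+\tfrac12;2n+1;\mp 2z)$. Your caution about the $n=0$ limit in the Jacobi route is exactly the point the paper flags as well.
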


\begin{proof} We provide two different proofs. The first proof is based on Theorem \ref{thm:jacobi}.
The Chebyshev polynomial is a special case of the Jacobi polynomial \eqref{eq:cheby_jacobi}, and thus their shifted and scaled versions have the relation 
$\overline{T}_n(\lambda)=\frac{4^n(n!)^2}{(2n)!}\overline{P}_n^{(-\frac{1}{2},-\frac{1}{2})}(\lambda)$.
Identifying $\alpha=\beta=-\frac{1}{2}$ in Theorem \ref{thm:jacobi}  and noting $\frac{\Gamma(\alpha+\beta+n+1)}{\Gamma(\alpha+\beta+2n+1)}=1$ when $n=0$,  we have
\begin{equation}\label{eq:proof2_1}
c_{\sigma,n} = \frac{2-\delta_{n0}}{2^{2n}n!}
(-b\sigma)^n {}_1 F_1\left( \begin{array}{c}
n+1/2\\ 
2n+1\end{array}; -b\sigma \right).
\end{equation}
The modified Bessel function is closely related to the 
generalized hypergeometric function \cite{olver2010nist},
\begin{equation}\label{eq:proof2_2}
I_n(z)=\frac{z^ne^{\pm z}}{2^n n!} {}_1 F_1 \left( \begin{array}{c}
n+1/2\\ 
2n+1\end{array}; \mp 2z \right).
\end{equation}
Substitute $I_n(z)$ in  \eqref{eq:proof2_2} with $z=\frac{b\sigma}{2}$ for the term ${}_1 F_1$ in  \eqref{eq:proof2_1},
and the result follows.

The second proof is based on the generating function of the modified Bessel functions \cite{olver2010nist}:
\begin{equation}\label{eq:proof2_3}
e^{z\cos\theta}=I_0(z)+2\sum_{n=1}^\infty I_n(z) \cos(n\theta).
\end{equation}
We use the generating function to develop the relation between exponential function and the Chebyshev polynomials.
Let $\theta=\cos^{-1}\lambda$,  and then \eqref{eq:proof2_3} can be rewritten in terms of the Chebyshev polynomials $T_n(\lambda)=\cos(n\cos^{-1}\lambda)$,
\begin{equation}\label{eq:proof2_4}
e^{z\lambda}=I_0(z)T_0(\lambda)+2\sum_{n=1}^\infty I_n(z) T_n(\lambda),
\end{equation}
where $T_0(\lambda)=1$.
Replacing $\lambda$ by $\frac{2\lambda}{b}-1$ and identifying $z=-\frac{b\sigma}{2}$ in \eqref{eq:proof2_4} give the 
expansion of $e^{-\lambda\sigma+\frac{b\sigma}{2}}$ in terms of the shifted and scaled Chebyshev polynomials $\overline{T}_n$:
\begin{equation*}
e^{-\lambda\sigma+\frac{b\sigma}{2}}=I_0\left(-\frac{b\sigma}{2}\right)\overline{T}_0(\lambda)+2\sum_{n=1}^\infty I_n\left(-\frac{b\sigma}{2}\right) \overline{T}_n(\lambda).
\end{equation*}
We divide the both sides of the equation by $e^{\frac{b\sigma}{2}}$, and the expansion of $e^{-\lambda\sigma}$ follows.
Note that  $I_n\left(-\frac{b\sigma}{2}\right)=(-1)^n I_n\left(\frac{b\sigma}{2}\right)$.
\end{proof}
In numerical implementation, given the maximum eigenvalue $\lambda_{max}$ of the discrete LB-operator, 
we set $b=\lambda_{max}$ such that the Chebyshev polynomials provide good approximation of the exponential weight over $[0,\lambda_{max}]$ \cite{hammond.2011}. 
\\

\noindent{\textbf{Hermite polynomials.}}
The Hermite polynomials 
\begin{equation*}
H_n(\lambda)=(-1)^n e^{\lambda^2}\frac{d^n}{d\lambda^n}e^{-\lambda^2}
\end{equation*}
with $H_{-1}(\lambda)=0$ and $H_0(\lambda)=1$
in  $(-\infty, \infty)$ satisfy  the recurrence relation   \eqref{eq:recurrence} with parameters   \cite{olver2010nist}
\begin{equation*}
A_n=2, B_n=0, C_n=-2n.
\end{equation*}
The orthogonal condition of the Hermite polynomials \cite{olver2010nist} is given by
\begin{equation*}
\int_{-\infty}^\infty H_n(\lambda)H_m(\lambda)e^{-\lambda^2} d\lambda=\sqrt{\pi}2^nn!\delta_{nm}.
\end{equation*}

\begin{theorem}\label{thm:hermite}
The Hermite  polynomial expansion of the solution to heat diffusion  \eqref{eq:cauchy}  is given by
\begin{equation*}
K_{\sigma}*f=\sum_{n=0}^\infty c_{\sigma,n}{H}_n\left(\Delta\right) f,
\end{equation*}
where the coefficients $c_{\sigma,n}$ have the closed-form solution
\begin{equation*}
c_{\sigma,n}=\frac{1}{n!}\left(\frac{-\sigma}{2}\right)^n e^{\frac{\sigma^2}{4}}.
\end{equation*}	
\end{theorem}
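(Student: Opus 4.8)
The plan is to reduce everything to the scalar expansion $e^{-\lambda\sigma}=\sum_{n=0}^\infty c_{\sigma,n}H_n(\lambda)$, since once this is in hand the operator identity is automatic: substituting it into the eigenfunction representation \eqref{eq:smooth_eig} and applying the polynomial-to-operator property \eqref{eq:property1}, which converts each $H_n(\lambda_j)\psi_j$ into $H_n(\Delta)\psi_j$, immediately yields $K_\sigma*f=\sum_n c_{\sigma,n}H_n(\Delta)f$. Thus the only real task is to identify the coefficients $c_{\sigma,n}$. A notable simplification over the Jacobi and Chebyshev cases is that the Hermite polynomials are orthogonal on all of $(-\infty,\infty)$, which already contains the LB-spectrum $[0,\infty)$; no shifting or scaling of the argument is required.

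First I would compute $c_{\sigma,n}$ by orthogonal projection, following the general recipe of \eqref{eq:expweight} but correcting for the fact that the stated orthogonality relation $\int_{-\infty}^\infty H_nH_m e^{-\lambda^2}\,d\lambda=\sqrt{\pi}\,2^n n!\,\delta_{nm}$ is not normalized to unity. This gives
\[
c_{\sigma,n}=\frac{1}{\sqrt{\pi}\,2^n n!}\int_{-\infty}^\infty e^{-\lambda\sigma}H_n(\lambda)e^{-\lambda^2}\,d\lambda.
\]
To evaluate the integral I would complete the square, writing $-\lambda\sigma-\lambda^2=-(\lambda+\tfrac{\sigma}{2})^2+\tfrac{\sigma^2}{4}$, and then use the Hermite moment identity $\int_{-\infty}^\infty H_n(\lambda)e^{-(\lambda-a)^2}\,d\lambda=\sqrt{\pi}\,(2a)^n$ with $a=-\sigma/2$. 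Substituting and cancelling the factors of $\sqrt{\pi}$ and $2^n$ produces exactly $c_{\sigma,n}=\frac{1}{n!}\bigl(\frac{-\sigma}{2}\bigr)^n e^{\sigma^2/4}$.

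A shorter and more transparent route, which I would actually favor in the write-up, bypasses the integration entirely by invoking the Hermite generating function $\sum_{n=0}^\infty H_n(\lambda)\frac{t^n}{n!}=e^{2\lambda t-t^2}$. Setting $t=-\sigma/2$ turns the right-hand side into $e^{-\lambda\sigma-\sigma^2/4}$, so that $e^{-\lambda\sigma-\sigma^2/4}=\sum_{n=0}^\infty \frac{(-\sigma/2)^n}{n!}H_n(\lambda)$; multiplying through by $e^{\sigma^2/4}$ then reads off the coefficients directly. Both the moment identity and this substitution trace back to the same Gaussian-integral computation of the generating function, so the two approaches are genuinely the same computation presented at different levels of packaging.

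The analytic content here is light; the point requiring care is justification rather than calculation. Specifically, I would need to confirm that the term-by-term passage from the scalar identity to the operator identity is legitimate, i.e.\ that the interchange of the sum over $n$ with the spectral sum over $j$ in \eqref{eq:smooth_eig} is valid and that the finite-degree property \eqref{eq:property1} extends to the series. Since the generating-function expansion of $e^{-\lambda\sigma}$ converges for all real $\lambda$ and the coefficients decay superexponentially in $n$, this interchange is harmless, and it is in any case already the mechanism underlying the general framework preceding Theorem \ref{thm:jacobi}. Hence the proof is essentially a one-line specialization of the generating function, and I expect no genuine obstacle.
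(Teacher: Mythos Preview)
Your proposal is correct and matches the paper's proof essentially line for line: the paper too writes $c_{\sigma,n}$ as the weighted inner product $\frac{1}{\sqrt{\pi}2^n n!}\int e^{-\lambda\sigma}H_n(\lambda)e^{-\lambda^2}\,d\lambda$, invokes the same Hermite moment identity with $y=-\sigma/2$, and then offers the generating-function route $e^{2\lambda z-z^2}=\sum_n \frac{z^n}{n!}H_n(\lambda)$ with $z=-\sigma/2$ as an alternative derivation. The only addition in your write-up is the remark on justifying the interchange of sums, which the paper omits.
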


\begin{proof} It follows that the  expansion of $e^{-\lambda\sigma}$ in terms of the Hermite polynomials has coefficients 
\begin{equation*}
c_{\sigma,n}=\frac{1}{\sqrt{\pi}2^nn!}\int_{-\infty}^\infty e^{-\lambda\sigma}  H_n(\lambda)e^{-\lambda^2}  d\lambda.
\end{equation*}
The closed-form solution of the expansion coefficients can be derived through the integral property involving the Hermite polynomials
$\int_{-\infty}^\infty e^{-(\lambda-y)^2}  H_n(\lambda)d\lambda=\sqrt{\pi}2^n y^n$
\cite{gradshteyn2007table} with $y=-\frac{\sigma}{2}$.

The statement can be also proved using the exponential generating function \cite{olver2010nist},
\begin{equation*}
e^{2\lambda z-z^2}=\sum_{n=0}^\infty\frac{z^n}{n!}H_n(\lambda).
\end{equation*}
Here, it is used to derive the expansion coefficients by dividing the both sides of the equation by $e^{-z^2}$ and then 
identifying $z=-\frac{\sigma}{2}$.
\end{proof}

\noindent{\textbf{Laguerre  polynomials.}}
The Laguerre polynomials $L_n$ satisfy  the recurrence relation  \eqref{eq:recurrence} with parameters 
\begin{equation*}
A_n=-\frac{1}{n+1}, B_n=\frac{2n+1}{n+1}, C_n=-\frac{n}{n+1}
\end{equation*}
and $L_{-1}(\lambda)=0$ and $L_0(\lambda)=1$ in $[0,\infty)$ \cite{olver2010nist}.

\begin{theorem}\label{thm:laguerre}
The Laguerre  polynomial expansion of the solution to heat diffusion  \eqref{eq:cauchy}  is given by
\begin{equation*}
K_{\sigma}*f =\sum_{n=0}^\infty c_{\sigma,n} {L}_n({\Delta}) f,
\end{equation*}
where the coefficients $c_{\sigma,n}$ have the closed-form solution
\begin{equation*}
c_{\sigma,n}=\frac{\sigma^n}{(\sigma+1)^{n+1}}.
\end{equation*}
\end{theorem}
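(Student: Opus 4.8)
The plan is to reuse the exact two-step template that carried the previous three theorems: first establish the scalar expansion $e^{-\lambda\sigma}=\sum_{n=0}^\infty c_{\sigma,n} L_n(\lambda)$ valid on $[0,\infty)$, and then invoke the general identities \eqref{eq:heat_Pn} and \eqref{eq:property1} to promote it to the operator statement $K_\sigma * f=\sum_{n=0}^\infty c_{\sigma,n} L_n(\Delta) f$. A feature specific to the Laguerre case makes this cleaner than before: the natural interval of orthogonality, $[0,\infty)$, already coincides with the range of the LB-eigenvalues, so—unlike the Jacobi and Chebyshev cases—no shifting or scaling of the polynomials is needed and I can work with $L_n$ directly.

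To extract the coefficients I would first write them through orthonormality. The Laguerre polynomials are orthonormal on $[0,\infty)$ against the weight $e^{-\lambda}$, so
\[
c_{\sigma,n}=\int_0^\infty e^{-\lambda\sigma}\, L_n(\lambda)\, e^{-\lambda}\, d\lambda=\int_0^\infty e^{-(\sigma+1)\lambda}\, L_n(\lambda)\, d\lambda,
\]
which is precisely the Laplace transform of $L_n$ evaluated at $\sigma+1$. Using the standard evaluation $\int_0^\infty e^{-s\lambda} L_n(\lambda)\,d\lambda=(s-1)^n/s^{n+1}$ with $s=\sigma+1$ yields $c_{\sigma,n}=\sigma^n/(\sigma+1)^{n+1}$, as claimed.

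Mirroring the alternative proofs given for the Chebyshev and Hermite theorems, I would also record a generating-function derivation. Starting from
\[
\sum_{n=0}^\infty L_n(\lambda)\,t^n=\frac{1}{1-t}\,\exp\!\left(-\frac{\lambda t}{1-t}\right),\qquad |t|<1,
\]
the idea is to choose $t$ so that the exponent becomes $-\lambda\sigma$. Solving $t/(1-t)=\sigma$ gives $t=\sigma/(\sigma+1)$, hence $1/(1-t)=\sigma+1$, and the right-hand side collapses to $(\sigma+1)\,e^{-\lambda\sigma}$. Dividing through by $\sigma+1$ produces $e^{-\lambda\sigma}=\sum_{n=0}^\infty \frac{\sigma^n}{(\sigma+1)^{n+1}}\, L_n(\lambda)$, reproducing the coefficients by inspection.

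The algebra here is entirely routine; the only genuine points to verify are convergence and interchange of limits. For every diffusion time $\sigma>0$ the generating-function parameter satisfies $t=\sigma/(\sigma+1)\in(0,1)$, so the series converges and the substitution is legitimate; equivalently, $s=\sigma+1>1$ secures absolute convergence of the Laplace integral. The one remaining step—passing from the pointwise scalar identity to the operator identity—is exactly what \eqref{eq:property1} licenses. Thus the main obstacle is not the coefficient computation but the convergence bookkeeping needed to justify the termwise replacement $\lambda_j\mapsto\Delta$ and the reordering of the double sum in \eqref{eq:heat_Pn} over the unbounded spectrum of $\Delta$; this is the same issue faced by the earlier theorems and carries over without change.
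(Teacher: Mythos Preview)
Your proposal is correct and matches the paper's proof essentially line for line: the paper computes $c_{\sigma,n}$ via the same orthogonality integral and the same Laplace-transform identity $\int_0^\infty e^{-\lambda y}L_n(\lambda)\,d\lambda=(y-1)^n y^{-n-1}$ with $y=\sigma+1$, and then offers the identical generating-function alternative with the substitution $z=\sigma/(\sigma+1)$. Your additional remarks on convergence and on why no shift/scale is needed go slightly beyond what the paper records, but the core argument is the same.
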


\begin{proof} From the orthogonal condition of the Laguerre polynomials \cite{olver2010nist},
\begin{equation*}
\int_0^\infty  L_n(\lambda) L_k(\lambda)  e^{-\lambda} d\lambda=\delta_{nk},
\end{equation*}
the  expansion of $e^{-\lambda\sigma}$ in terms of the Laguerre polynomials has coefficients given as the inner product of $e^{-\lambda\sigma}$ and  $ L_n$:
\begin{equation*}
c_{\sigma,n}=\int_{0}^\infty e^{-\lambda\sigma}  L_n(\lambda)e^{-\lambda}   d\lambda.
\end{equation*}
The closed-form solution of the expansion coefficients can be derived through the integral property
$\int_{0}^\infty e^{-\lambda y}  L_n(\lambda)d\lambda=(y-1)^n y^{-n-1}$
\cite{gradshteyn2007table} with $y=\sigma+1$.

Alternately, we can prove the theorem using the exponential generating function of the Laguerre polynomials \cite{olver2010nist},
\begin{equation*}
\frac{1}{1-z}e^{-\frac{\lambda z}{1-z}}=\sum_{n=0}^\infty z^n L_n(\lambda).
\end{equation*}
Multiply the both sides of the equation by $1-z$. Let $\frac{z}{1-z}=\sigma$, i.e.,  $z=\frac{\sigma}{\sigma+1}$, and then the expansion of $e^{-\lambda\sigma}$ follows.
\end{proof}

\subsection{Numerical Implementation}
The MATLAB code is available at \url{http://www.stat.wisc.edu/~mchung/chebyshev}.

{\em Expansion degree.} The expansion degree $m$  is empirically determined to the sufficiently small MSE. 
Fig. \ref{fig:hippo_allmethods} displays the heat diffusion on the left hippocampus surface mesh with 2338 vertices and 4672 triangles, with diffusion time  $\sigma=1.5$  and expansion degree $m=100$. The reconstruction error is measured  by the mean squared error (MSE) between the polynomial approximation method and  the original surface mesh. Although all the methods converged with less than degree $m=100$, the Chebyshev approximation method converges the fastest. The Chebyshev polynomials will be mainly used through the paper but other polynomials can be similarly applied.

In this study, we adopted the LB-operator discretization \cite{tan.2015} for the proposed method. This LB-operator discretization differs from our previous cotan discretization used in the FEM based diffusion solver \cite{chung.2003.cvpr,chung.2004.ISBI} and LB-basis computation \cite{qiu.2006}. To rule out the potential accuracy differences caused by different discretization methods, the LB-operator discretization in the FEM based diffusion solver \cite{chung.2003.cvpr,chung.2004.ISBI} and the LB-eigenfunction approach \cite{seo.2010.MICCAI,chung.2015.MIA} was replaced by \cite{tan.2015} for a fairer comparison.

\begin{figure}[t]
\centering
\includegraphics[width=1\linewidth]{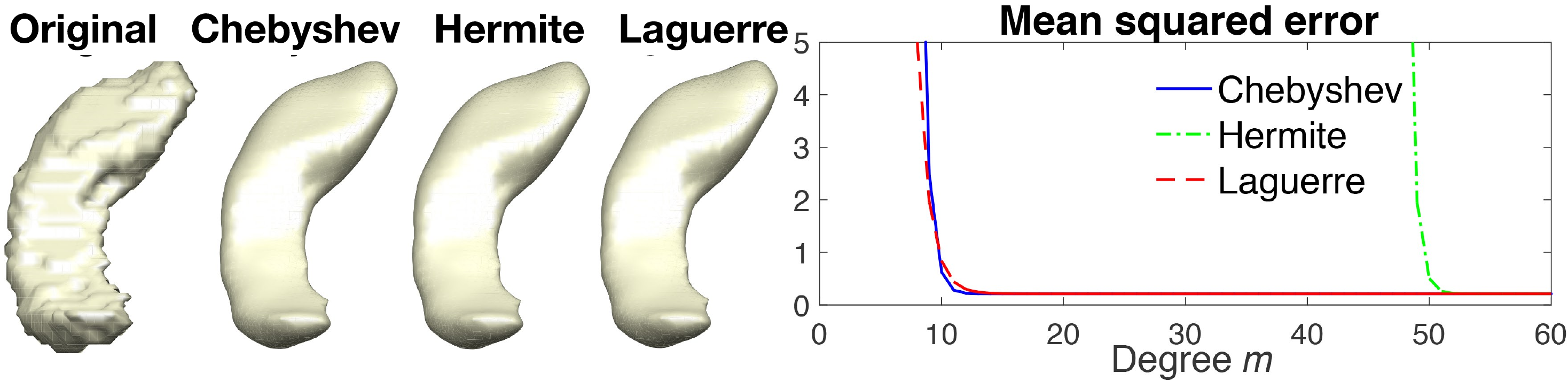}
\caption{Left: left hippocampus surface, heat diffusion with  $\sigma=1.5$ using the Chebyshev, Hermite and Laguerre approximation methods with degree $m=100$. Right: MSE between the original surface and the polynomial approximation methods for different expansion degree $m$.
The Hermite approximation method has the slowest convergence, while the Chebyshev method converges slightly faster than the Laguerre method. But the all the methods converge quickly with degree $m=100$. 
}
\label{fig:hippo_allmethods}
\end{figure}

The LB-operator  is discretized in a triangle mesh via the cotan formulation  as 
$\Delta_{ij}= C_{ij} /A_i,$
where $A_i$ is the estimated area at vertex $p_i$, and $C=(C_{ij})$ is the global coefficient matrix \cite{chung.2001.diffusion,chung.2004.ISBI,chung.2015.MIA,qiu.2006,tan.2015}. The construction of
$C_{ij}$ is as follows.
Let $T_{ij}^+$ and $T_{ij}^-$ be the two triangles sharing the same vertex $p_i$ and its neighboring vertex $p_j$.
Let the two angles opposite to the edge connecting $p_i$ and $p_j$ be $\phi_{ij}$ and $\theta_{ij}$ respectively for $T_{ij}^+$ and $T_{ij}^-$ (Fig. \ref{fig:cotan}-left).
The off-diagonal entries of the global coefficient matrix are 
$C_{ij}=-(\cot\theta_{ij}+\cot\phi_{ij})/2$
if $p_i$ and $p_j$ are adjacent and $C_{ij}=0$ otherwise.
The diagonal entries are  $C_{ii}=-\sum_{j} C_{ij}$.

\begin{figure}[t]
\centering
\includegraphics[width=0.8\linewidth,clip=true]{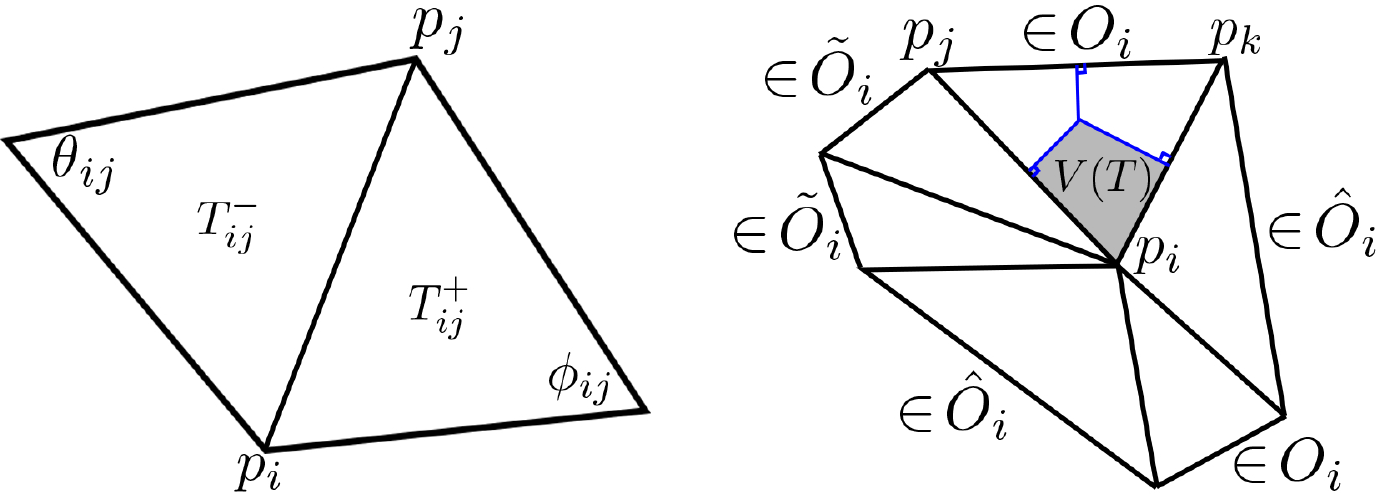}
\caption{Left: $\phi_{ij}$ and $\theta_{ij}$, angles opposite to the edge connecting $p_i$ and $p_j$  corresponding to $T_{ij}^+$ and $T_{ij}^-$, used for computing the global coefficient matrix ${\mathbf C}=(C_{ij})$.
Right: Computation  of area $A_i$ at vertex $p_i$. 
The neighboring triangles are decomposed into three sets: $O_i$ is the set of nonobtuse triangles, $\hat{O}_i$ is the set of obtuse triangles with obtuse angle at $p_i$, and $\tilde{O}_i$  is the set  of obtuse triangles with nonobtuse angle at $p_i$.}
\label{fig:cotan}
\end{figure}

For the area $A_i$, we adopt the computation in \cite{tan.2015,meyer2003discrete}.
At each vertex $p_i$, the neighboring triangles are separated into three sets: $O_i$ is the set of nonobtuse triangles, $\hat{O}_i$ is the set of obtuse triangles with obtuse angle at $p_i$, and $\tilde{O}_i$  is the set  of obtuse triangles with nonobtuse angle at $p_i$ (Fig. \ref{fig:cotan}-right). Then $A_i$ is computed as
\begin{equation*}
A_i=\sum_{T\in O_i}V(T)+ \frac{1}{2} \sum_{T\in \hat{O}_i} A(T) +\frac{1}{4} \sum_{T\in \tilde{O}_i} A(T),
\end{equation*}
where $V(T)$ is the Voronoi region (gray area) computed following  \cite{meyer2003discrete}. Let $p_j$ and $p_k$ denote the other two vertices of $T$ with angles $\angle p_j$ and $\angle p_k$ and edge lengths  $|p_ip_j|$ and $|p_ip_k|$. Then, the Voronoi region area $V(T)$ at $p_i$ is given by
$\frac{1}{8}(|p_ip_j|^2\cot\angle p_k+|p_ip_k|^2\cot\angle p_j)$ (gray area of Fig. \ref{fig:cotan}-right). The computation of $A(T)$ is done using the Heron's formula involving the three edge lengths of $T$. A simpler cotan discretization in \cite{chung.2004.ISBI,qiu.2006, chung.2015.MIA} can be also used.

{\em Iterative kernel smoothing.} We can obtain diffusion related multiscale features at different time points by iteratively performing heat kernel smoothing. Instead of  applying the  polynomial approximation separately for each  $\sigma$, the computation can also be realized in an iterative fashion. The solution to heat diffusion with larger diffusion time can be broken into iterative heat kernel convolutions with smaller diffusion time \cite{chung.2015.MIA},
\begin{equation*}
K_{\sigma_1 +\sigma_2  + \cdots + \sigma_m} \ast f = K_{\sigma_1} \ast K_{\sigma_2}\ast \cdots \ast K_{\sigma_m} \ast f.
\end{equation*}
Thus, if we compute $K_{0.25} \ast f$, then $K_{0.5} \ast f$ can be simply computed as two repeated kernel convolutions, $K_{0.25} \ast (K_{0.25} \ast f)$.
Heat diffusion with much larger diffusion time can be done similarly. Fig. \ref{fig:hippo_iterative} displays heat diffusion with $\sigma=0.25$, 0.5, 0.75 and 1 realized by iteratively applying the Chebyshev approximation method with  $\sigma=0.25$ sequentially four times. As $\sigma$ increases, we are smoothing the surface more smoothly and MSE increases.

\begin{figure}[t]
\centering
\includegraphics[width=1\linewidth]{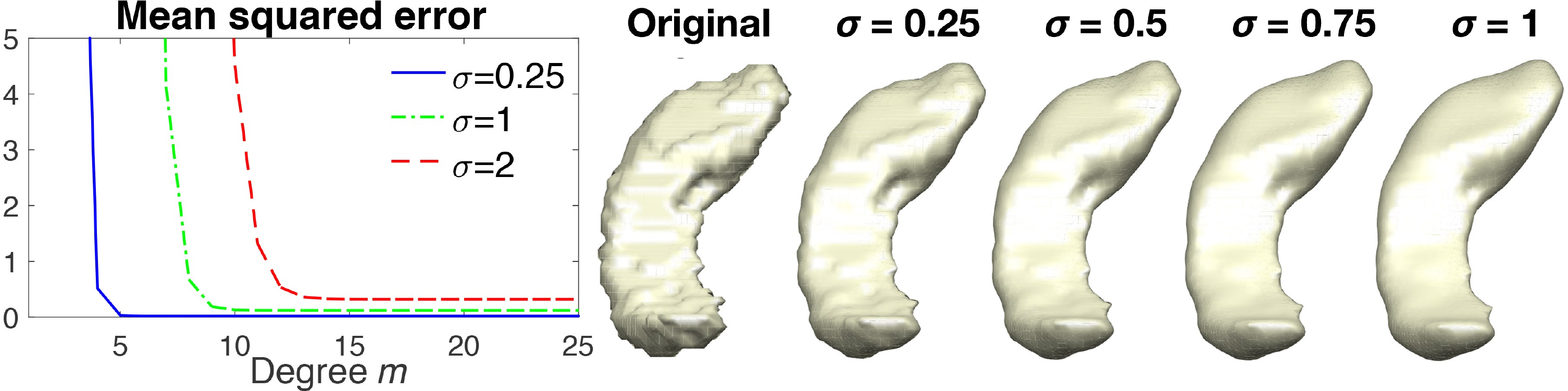}
\caption{We iteratively applied Chebyshev approximation method with $\sigma=0.25$ four times to the  left  hippocampus surface mesh coordinates to obtain heat diffusion with $\sigma=0.25$, 0.5, 0.75 and 1. As $\sigma$ increases, we are smoothing the surface more and MSE increases.}
\label{fig:hippo_iterative}
\end{figure}

\subsection{Validation}
We compared  the Chebyshev method against the FEM based diffusion solver \cite{chung.2003.cvpr,chung.2004.ISBI} and the LB-eigenfunction approach \cite{seo.2010.MICCAI,chung.2015.MIA} on the unit sphere $S^2$, where the ground truth can be analytically obtained by the spherical harmonics (SPHARM) $Y_{lm}$, which are the eigenfunctions of the LB-operator with eigenvalues
$l(l+1)$. Given surface data $f$ on the sphere
\begin{equation}\label{eq:SPHARM}
f(p)=\sum_{l=0}^{\infty} \sum_{m=-l}^l f_{lm}Y_{lm}(p),  \ \ \ p \in S^2.
\end{equation}
The heat kernel convolution at time $\sigma$ is given as \cite{chung.2007.TMI}
\begin{equation}\label{eq:SPHARM_heat}
g(p, \sigma)=\sum_{l=0}^\infty \sum_{m=-l}^l e^{-l(l+1)\sigma}f_{lm}Y_{lm}(p),
\end{equation}
where $f_{lm}=\int_{S^2} f(p)Y_{lm}(p) d\mu(p).$

\begin{figure}[t]
\centering
\includegraphics[width=1\linewidth]{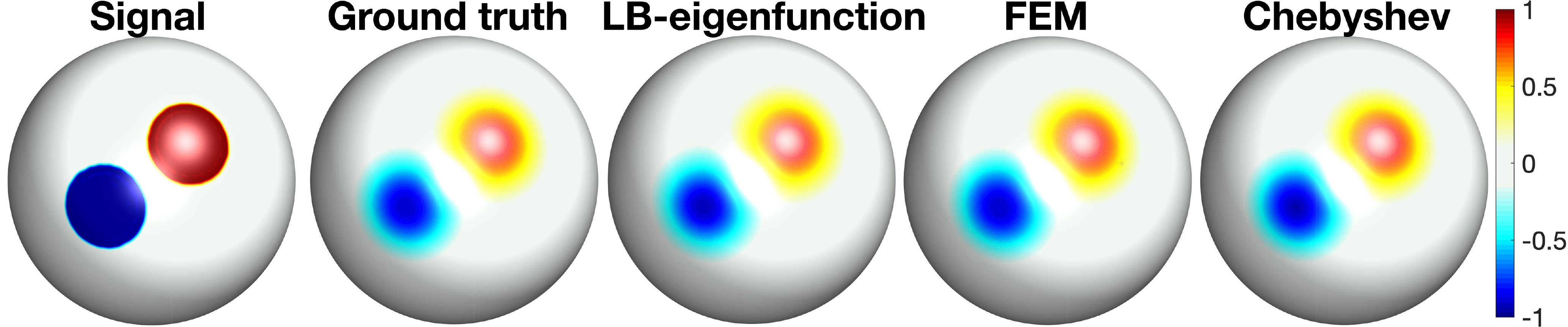}
\caption{Signal (initial condition in diffusion) and ground truth of heat diffusion with $\sigma=0.01$ and 163842 mesh vertices constructed from degree 100 SPHARM. The LB-eigenfunction approach with 210 eigenfunctions, FEM based diffusion solver
with 405 iterations, and Chebyshev approximation method with degree 45 have	 similar reconstruction error (MSE  about $10^{-5}$).} 
\label{fig:truth}
\end{figure}	

{\em Ground truth.} Assign value $1$ within one circular region, $-1$ within the other circular region, and all other regions were assigned value 0 on the spherical meshes with 2562, 10242, 40962, 163842, 655362 and  2621442  vertices (Fig. \ref{fig:truth}). We fitted the above signal using  SPHARM with degree $l= 100$, which is high enough degree to provide numerical accuracy up to 4 decimal places in terms of  MSE. The above signals were smoothed with $\sigma=0.01$ using \eqref{eq:SPHARM_heat} and taken as the ground truth.

We applied the three methods with different  $\sigma$ values (0.005, 0.01, 0.02 and 0.05). Fig. \ref{fig:truth} displays the result of the LB-eigenfunction approach with 210 eigenfunctions, FEM based diffusion solver with 405 iterations, and Chebyshev approximation method with 45 degree that achieved the  similar reconstruction error of about $10^{-5}$ MSE.

{\em Computational run time over mesh sizes.}
To achieve the similar reconstruction error, the FEM based diffusion solver and Chebyshev approximation method need more iterations and  higher degree  for larger meshes, while the LB-eigenfunction approach  is nearly unaffected by the mesh size (Fig. \ref{fig:time_vertices}-left). Fig. \ref{fig:time_vertices}-right displays the computational time of the three methods at the similar accuracy (MSE about $10^{-5}$).

\begin{figure}[t]
\centering
\includegraphics[width=1\linewidth]{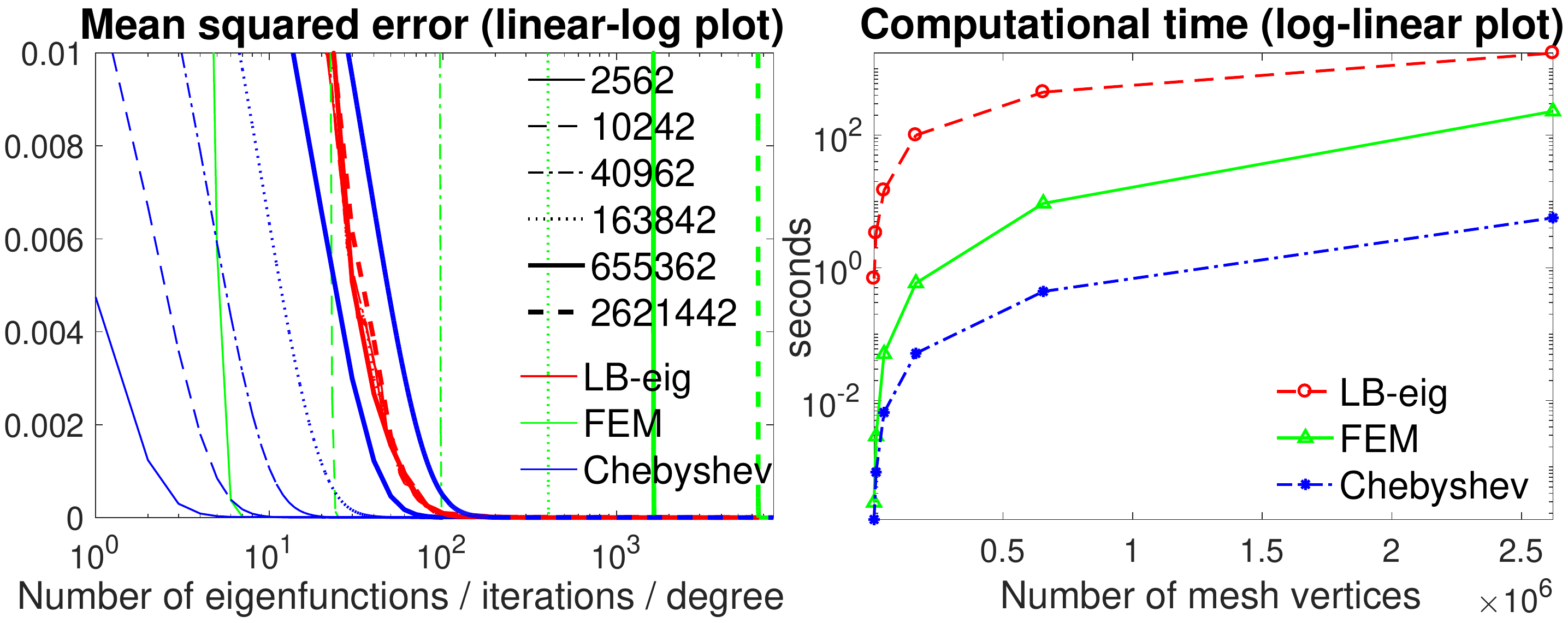}
\caption{Left: MSE of the LB-eigenfunction approach, FEM based diffusion solver and Chebyshev polynomial approximation method against the ground truth ($\sigma=0.01$) with different number of eigenfunctions, iterations and expansion degree respectively on the unit sphere with 2562, 10242, 40962, 163842, 655362 and  2621442 mesh vertices. Right: the computational time versus number of mesh vertices at the similar reconstruction error (MSE about $10^{-5}$).}
\label{fig:time_vertices}
\end{figure}

\begin{figure}[t]
\centering
\includegraphics[width=1\linewidth]{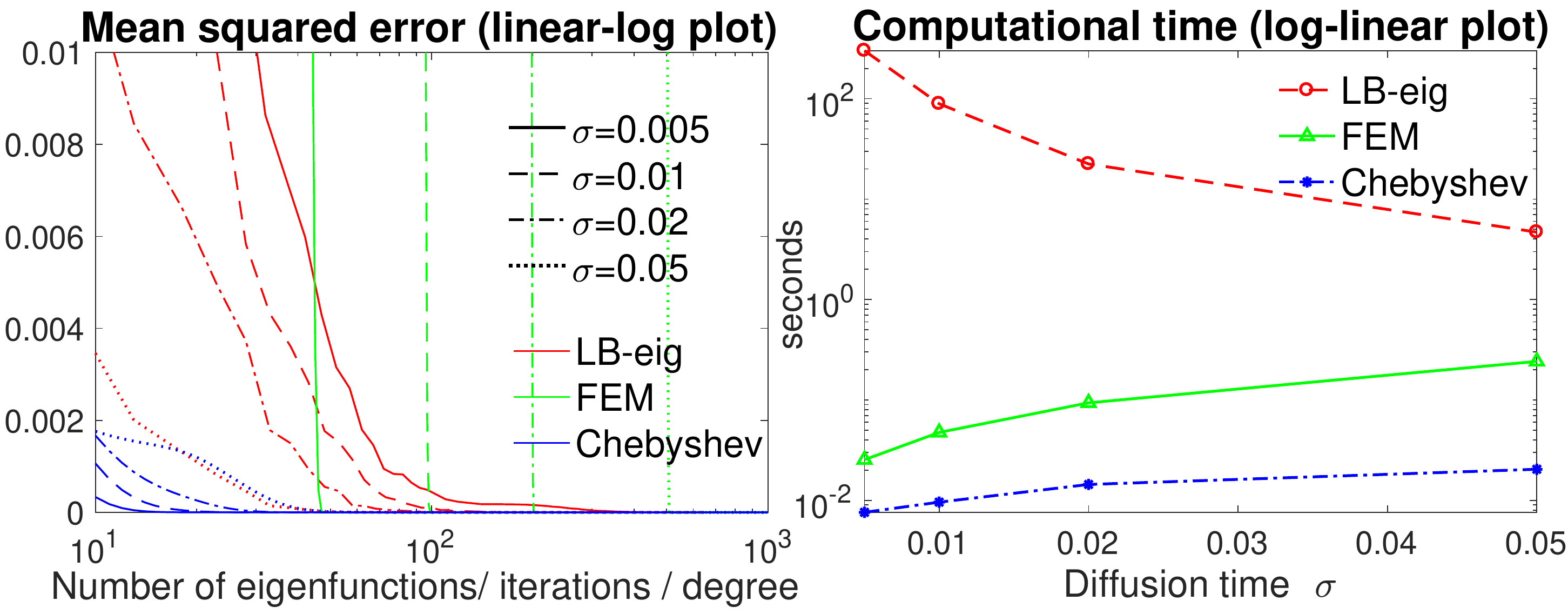}
\caption{	
Left: MSE of the LB-eigenfunction approach, FEM based diffusion solver and Chebyshev approximation method against the ground truth with different number of eigenfunctions, iterations and expansion degree respectively.
The diffusion time $\sigma=0.005$, 0.01, 0.02 and 0.05  and unit sphere with  40962 mesh vertices  were used.
Right: the computational time versus $\sigma$ at similar accuracy (MSE about $10^{-7}$).}
\label{fig:time_sigma}
\end{figure}

{\em Computational run time over diffusion times.} 
The computational run time for different $\sigma$ (0.005, 0.01, 0.02, 0.05) with fixed spherical mesh resolution (40962 vertices) was also investigated. To achieve the similar reconstruction error, the FEM based diffusion solver and Chebyshev expansion method need more iterations and higher degree for larger $\sigma$, while the LB-eigenfunction approach requires less number of eigenfunctions (Fig. \ref{fig:time_sigma}-left). Fig. \ref{fig:time_sigma}-right displays the computational run time over  $\sigma$
at the same MSE of about $10^{-7}$.

From Figs. \ref{fig:time_vertices} and  \ref{fig:time_sigma}, the LB-eigenfunction method is the slowest. The polynomial approximation method is up to 40 times faster than the FEM based diffusion solver and took 5.7 seconds for $\sigma=0.01$ on the sphere with 2621442 vertices.

\section{Application}

\subsection{HCP Dataset}
We used the T1-weighted MRI of 268 females and 176 males in the Human Connectome Project (HCP) database \cite{vanessen.2012}. MRI were obtained using a Siemens 3T Connectome Skyra  scanner  with a 32-channel head coil. The details on image acquisition parameters and image processing can be found in \cite{glasser2013minimal,smith.2013}.

A bias field correction was performed, and the T1-weighted image was registered to the MNI space with a FLIRT affine and then a FNIRT nonlinear registration \cite{jenkinson.2002}. The distortion- and bias-corrected T1-weighted image was then undergone the FreeSurfer's recon-all pipeline \cite{dale1999cortical,fischl2007cortical,segonne2005genetic}
that includes the segmentation of  volume into predefined structures,
reconstruction of white and pial cortical surfaces, and FreeSurfer's standard folding-based surface registration to their surface atlas. Then, the white, pial and spherical surfaces of the left and right hemispheres were produced.

\subsection{Sulcal and Gyral Curve Extraction}

The automatic sulcal curve extraction method (TRACE) \cite{lyu2010spectral, lyu.2018} was used to detect concave regions (sulcal fundi) along which sulcal curves are traced. The method consists of two main steps: (1) sulcal point detection and (2) curve delineation by tracing the detected sulcal points. For sulcal point detection, concave points are initially obtained from the vertices of the input surface mesh by thresholding mean curvatures.  The concave points are further filtered by employing the line simplification method \cite{ramer1972iterative}
that simplifies the sulcal regions without significant loss of their morphological details. For curve delineation, the selected sulcal points are connected to form a graph, and the curves are delineated by tracing shortest paths on the graph. Finally, the sulcal curves are traced over the graph by the Dijkstra's algorithm \cite{dijkstra.1959}. We use similar idea to gyral curve extraction by finding convex regions.

\begin{figure}[t]
\centering
\includegraphics[width=1\linewidth,clip=true]{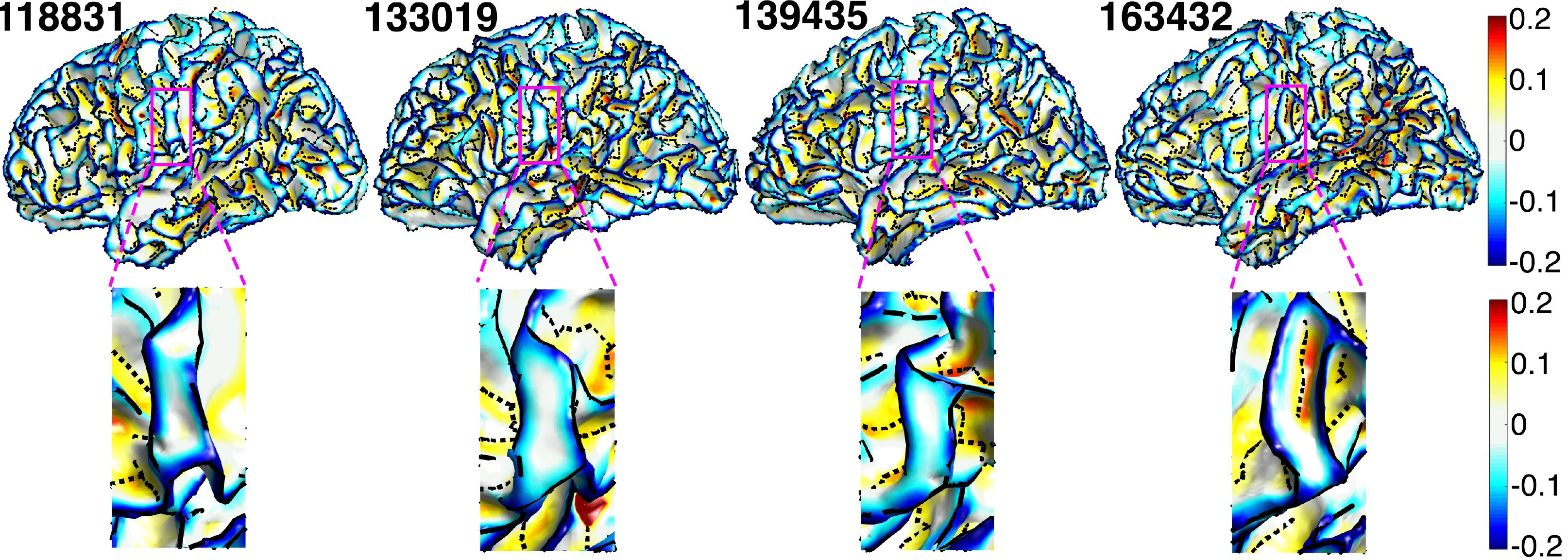}
\caption{Top: gyral curves (black solid line), sulcal curves (black dashed line), and the smoothed mean curvature  of four subjects.
Bottom: the enlarged magenta regions of the top figures showing that there is no sulcal curve between gyral curves in the left three subjects due to shallow depth or low mean curvature).}
\label{fig:nosulcal}
\end{figure}

The TRACE method only identified the major gyral and sulcal curves. Minor curves in almost flat regions like plateau or with very low curvature, shallow depth or short length  were not extracted. 
Fig. \ref{fig:nosulcal} displays the sulcal and gyral curves and the smoothed mean curvature of four subjects.
In the enlarged regions, the first three subjects have no sulcal curves between the two gyral curves due to very low mean curvature, while the fourth subject has sulcal curve in the same region because of  higher mean curvature.

\begin{table}[t]
\caption{Reproducibility and robustness to noise measured by average and Hausdorff distances (mm) (results from \cite{lyu.2018}).} 
\label{table:TRACE}
\begin{adjustbox}{width=\columnwidth,center}
\begin{tabular}{ | l | l |  l   l  l | l  l   l |}
\hline
\multirow{2}{*}{} & \multirow{2}{*}{Method} & \multicolumn{3}{c|}{Average distance}  &  \multicolumn{3}{c|}{Hausdorff distance}  \\ 
& & mean  & min & max  & mean  & min & max \\ 
\hline
\multirow{2}{*}{Reproducibility} & TRACE & 1.00 & 0.87 & 1.17 & 1.71 & 1.54 & 1.99\\ 
   						&Li {\em et al.} \cite{li2010automated} & 1.23  & 0.87 & 3.28 & 1.94 & 1.54 & 4.23\\ 
\hline
\multirow{2}{*}{Robustness} & TRACE & 1.06   & 0.99 &  1.19 & 1.82 & 1.67 & 2.06	\\ 
   						&Li {\em et al.} \cite{li2010automated} & 1.42 & 1.21 & 1.62 & 2.73 & 2.18 & 3.29\\ 
\hline
\end{tabular}
\end{adjustbox}
\end{table}

The TRACE method was validated using the Kirby reproducibility dataset with 21 T1-weighted scans \cite{landman2011multi}. The reproducibility was measured by the distance between two corresponding surfaces (scan and re-scan sessions). The robustness to noise compared to  \cite{li2010automated} was done using synthetic noisy surfaces, which were generated by adding vertex-wise random displacements to the original surfaces. The displacement at each vertex follows an independent and identically distributed uniform distribution between 0 and 1.0 mm. We used the average and Hausdorff distances \cite{huttenlocher1993comparing}. The experimental results from \cite{lyu.2018} (Table \ref{table:TRACE}) show
higher reproducibility and robustness to noise in TRACE than the existing method \cite{li2010automated}. The paired $T$-test showed significant differences between these two methods in both the average and  Hausdorff distances, with $p$-values 0.0045 and 0.003 respectively in reproducibility  and $p$-values$<10^{-16}$ in robustness. For the comparison with manually labeled primary curves \cite{joshi2010sulcal,pantazis2010comparison}, the MRIs Surfaces Curves dataset (\url{http://sipi.usc.edu/~ajoshi/MSC}) consisting of 12 subjects was used. The mean values of the average and Hausdorff distances of the 26 primary curves are 1.32 and 3.77 mm in the TRACE method, which are smaller than 1.38 and 4.20 mm in \cite{li2010automated}. Even though the paired $T$-test found no significant difference between the two methods in the average distance ($p$-value=0.0713), we found significant difference in the Hausdorff distance ($p$-value=$7.3\times10^{-6}$).

\begin{figure}[t]
\centering
\includegraphics[width=1\linewidth,clip=true]{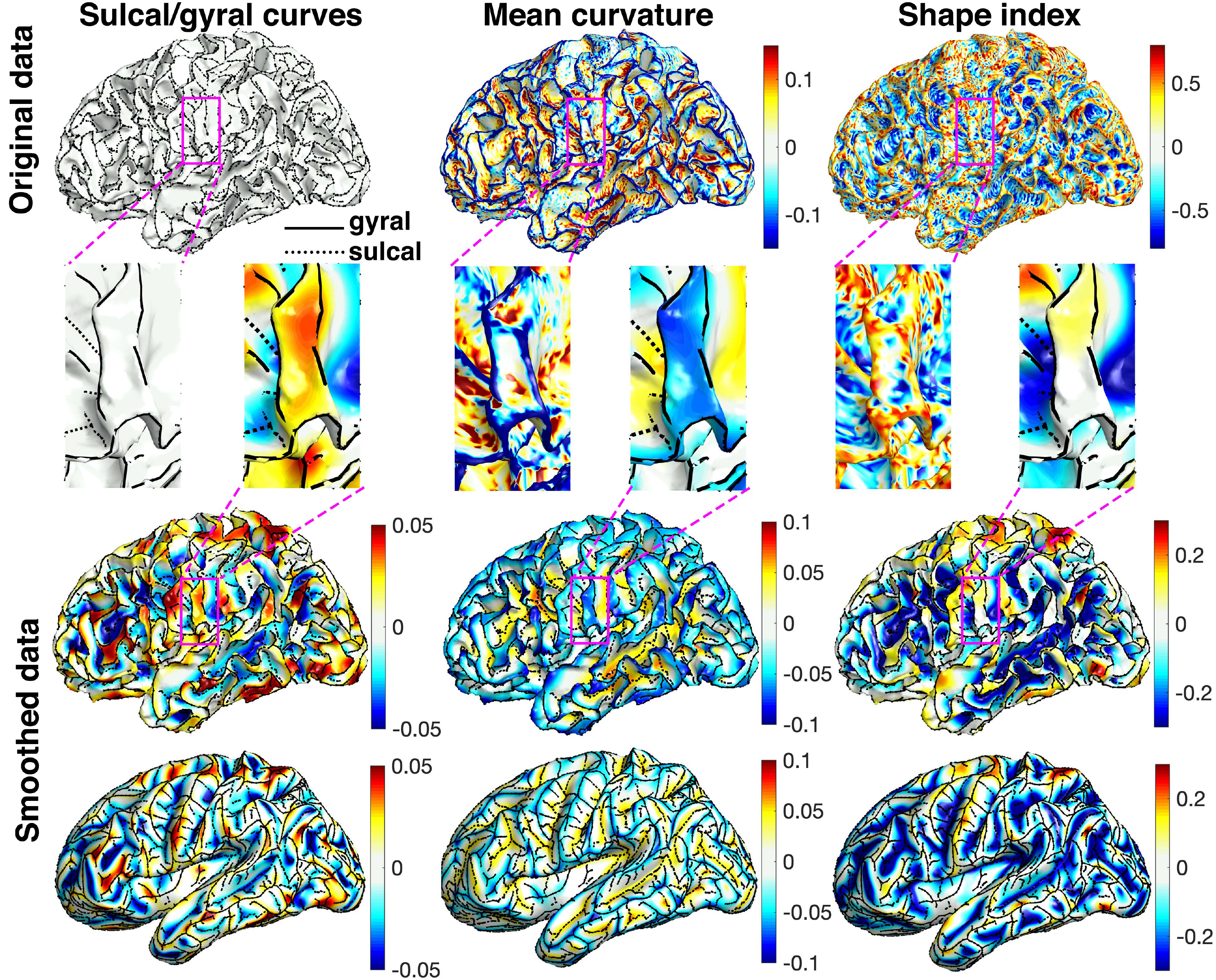}
\caption{The sulcal/gyral curves  (left),  mean curvature (middle) and  SI (right).
1st and 2nd rows:  original data displayed on the white matter surfaces and the enlarged magenta regions.
The gyral and sulcal curves are marked by solid and dashed black lines respectively and are assigned heat values 1 and -1 when smoothing. The mean curvature is positive for sulci and negative for gyri. The SI is positive for gyri and negative for sulci.
In the enlarged magenta regions, the noisy mean curvature and SI show sulcal patterns in the middle of the gyral region, which is not shown in the sulcal/gyral curve extraction method. Smoothing is done with diffusion time $\sigma=0.001$.}
\label{fig:curvature_all}
\end{figure}

\begin{figure*}[t]
\centering
\includegraphics[width=0.93\linewidth,clip=true]{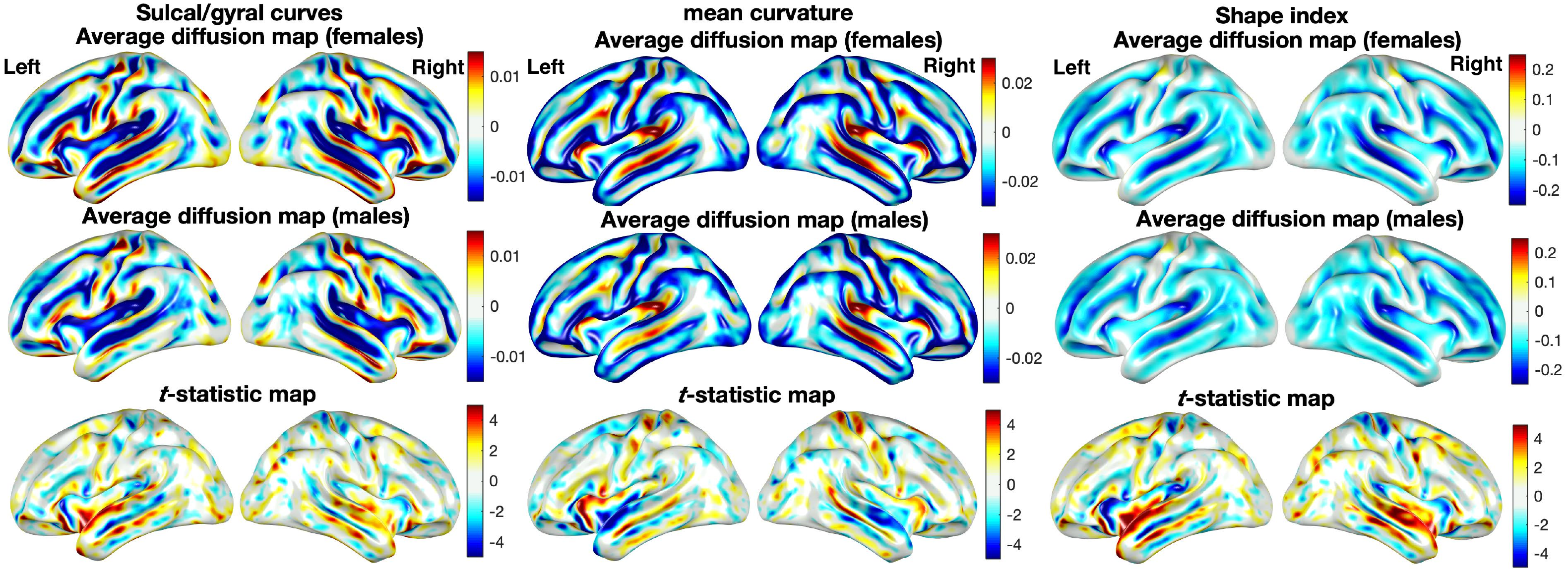}
\caption{The average diffusion maps of the  sulcal/gyral curves (left), mean curvature (middle) and SI (right) of 268 females and 176 males displayed on the average surface template. 
The $T$-statistic  maps (bottom) show the
localized sulcal and gyral pattern, mean curvature and SI differences (female - male) thresholded at $\pm 4.96$ (uncorrected $p$-value of $10^{-6}$),
having significant difference mainly in the temporal lobe.}
\label{fig:curv_ttest}
\end{figure*}

\subsection{Diffusion Maps on Sulcal and Gyral Curves} 
The junctions between sulci are highly variable \cite{mangin2004object}. A sulcus corresponding to a long elementary fold in one subject may be made up of several small elementary folds in  another subject \cite{cachia.TMI.2003}. Each subject has different number of vertices and edges in sulcal and gyral graphs, and they don't exactly match across subjects even after registration. It is difficult to directly compare such graphs at the vertex level across subjects. Thus, the proposed polynomial approximation was used to smooth out  the sulcal  and gyral curves and obtain the smooth representation of curves that enables the vertex-level comparisons.

The extracted gyral curves were assigned heat value 1, and sulcal curves were assigned heat value -1. All other parts of surface mesh vertices were assigned value 0. Then heat diffusion was performed on these values. The diffusion map values range from -1 to 1. The close to the value of 1 indicates the likelihood of the gyral curves while the close to the value of -1  indicates the likelihood of the sulcal curves. The proposed method is motivated by the voxel-based morphometry (VBM) \cite{chung.2003.NI,ashburner.2000}, where the segmented white or gray matter regions are compared in 3D volume. Due to the difficulty of exactly aligning the white or gray matter regions separately, Gaussian kernel smoothing with large bandwidth was used to mask the shape variations across subjects and approximately align the segmented regions. Also a similar approach was used in the tract-based spatial statistics (TBSS) \cite{bodini2009exploring,bach2014methodological} in analyzing white matter regions in diffusion tensor imaging  that does not exactly align across subjects.

In the numerical implementation, a sufficient large expansion degree  $m=1000$ were used. In a desktop with 4.2 GHz Intel Core i7 processor, the construction of the discrete LB-operator took 5.76 seconds, the computation of Chebyshev coefficients took $2.6\times10^{-4}$ seconds, and heat diffusion by the Chebyshev polynomials took 3.19 seconds for the both hemispheres in average. The total computation took  8.95 seconds per subject in average. The diffusion maps were then subsequently used in localizing the male and female differences. One example of  diffusion map  is displayed in Fig. \ref{fig:curvature_all}-left.

\subsection{Univariate Two-Sample $T$-Test}  
The diffusion maps with $\sigma=0.001$ were constructed for  268 females 176 males. The average diffusion maps in Fig. \ref{fig:curv_ttest}-left displays the major differences in the temporal lobe, which is responsible for processing sensory input into derived meanings for the appropriate retention of visual memory, language comprehension, and emotion association \cite{smith.2007}. 

The two-sample $T$-statistics maps are in the range of $[-6.5, 7.02]$. Any $T$-statistic with absolute value above 2.75
(red and blue regions) is considered as statistically significant  using the false discovery rate (FDR) at 0.05. If the $T$-statistic map shows high $T$-statistic value at a particular vertex, it indicates that one group has consistently more gyral curves than sulcal curves at the vertex. If we use slightly different diffusion time $\sigma$, we still obtain similar results.

We did an additional analysis using the mean curvature and shape index (SI). We estimated the curvatures and SI, which are the functions of curvature, by fitting the local quadratic  surface in the first neighboring vertices \cite{joshi1995differential,chung.2003.NI} 
\begin{equation*}
f(x_1,x_2) = \beta_0+\beta_1 x_1 + \beta_2 y_2 + \frac{1}{2}\beta_3 x_1^2 + \beta_4 x_1x_2 + \frac{1}{2}\beta_5 x_2^2.
\end{equation*}
The curvature and SI are expected to be noisy and require smoothing to increase statistical sensitivity and the signal-to-noise ratio \cite{chung.2001.diffusion, tosun2008geometry,joshi2009parameterization} (Fig. \ref{fig:curvature_all}). Smoothing surface data before statistical analysis is often done in various cortical surface features. Even the FreeSurfer package output the smoothed mean and Gaussian curvatures \cite{pienaar2008methodology,fischl2012freesurfer}.
Fig. \ref{fig:curvature_all} displays the results of smoothed mean curvature and SI maps.

We performed  the two-sample $T$-test on the smoothed mean curvature and SI maps (Fig. \ref{fig:curv_ttest}-middle and right). The results show significant gender difference mainly in the temporal lobe,  consistent to the findings in the proposed sulcal/gyral curve analysis (Fig. \ref{fig:curv_ttest}-left). \\

\subsection{Multivariate Two-Sample $T$-Test}
The iterative kernel convolution was used to compute the diffusion at different time points quickly.  The values of diffusion at different time points were then used in constructing the multiscale  features. In this study, we adopted 10 time points $\sigma=0.0005, 0.001, \cdots, 0.0045, 0.005$. Fig. \ref{fig:subj67_Heat} shows the diffusion maps of one representative subject. At each vertex, the multiscale diffusion features are used 
to determine the significant difference between the  females and males. We used the two-sample Hotelling's $T^2$-statistic, which is the multivariate generalization of the two-sample $T$-statistic \cite{kim2012wavelet,chung.2008.MIAR}. 
Fig. \ref{fig:HotelT2} shows the Hotelling's $T^2$-statistics and the corresponding $p$-values in the log-scale.
The heat diffusion has $T^2$-statistics in the range of  $[0.13,\ 8.2]$ with minimum $p$-value $3.4\times10^{-12}$. 
Any $T^2$-statistic above 2.28 (yellow and red regions) is considered as significant at FDR 0.05.

In comparison, we used the {\em diffusion wavelet features} \cite{coifman2006diffusion,tan.2015,hammond.2011,kim2012wavelet}
at ten different scales and showed that the proposed method can achieve similar performance in localizing signal regions as the wavelet features.  
The diffusion wavelet \cite{coifman2006diffusion,hammond.2011,tan.2015} has the similar algebraic form as the heat kernel:
\begin{equation*}
W_t(p,q)=\sum_{j=0}^{\infty} g(\lambda_j t)\psi_j(p)\psi_j(q).
\end{equation*}
The difference between the heat kernel and diffusion wavelet transform is the weight function $g$,
which determines the spectral distribution.

Compared to the heat kernel, the weight function $g$ attenuating all low and high frequencies outside the passband makes the diffusion wavelet work as a band-pass filter.
The wavelet transform transform of $f$ is then given by
\begin{equation*}
W_t \ast f (p) 
=\sum_{j=0}^{\infty}g(\lambda_j t)f_j\psi_j(p), \ \ \ \ f_j=\int_{\mathcal{M}} f(p) \psi_j (p) d\mu(p).
\end{equation*}
The proposed polynomial approximation scheme can be applied to the diffusion wavelet transform through expanding $g(\lambda t)$ by orthogonal polynomials. 
In this paper, we used the following cubic spine as $g(\lambda t)$ \cite{hammond.2011}
\begin{equation}\label{eq:WT_weight}
g(x)=\left\{\begin{array}{ll}
x_1^{-\alpha} x^\alpha, & x<x_1\\ 
-5+11x-6x^2+x^3, & x_1\leq x\leq x_2\\
x_2^{-\beta} x^\beta, &x>x_2
\end{array}\right.,
\end{equation}
where $\alpha=\beta=2$, $x_1=1$ and $x_2=2$.
The scaling parameter  $t$  controls the passband of the diffusion wavelet (Fig. \ref{fig:weight}).

Diffusion at  different diffusion time $\sigma$ and diffusion wavelets at different scaling parameter  $t$ contain different spectral information  of  input data $f$ (Fig. \ref{fig:weight}). Thus, the heat diffusion with a varying $\sigma$ and diffusion wavelet with varying $t$ provide multiscale  features of  $f$. All the heat diffusion features contain low-frequency components. If the initial  surface data  suffer from significant low-frequency noise, the diffusion wavelet transform would be more suitable. On the other hand, if most noises are in high frequencies, performance of the both methods would be similar and we do not really needs diffusion wavelet features \cite{bernal2008detection}.

\begin{figure}[t]
\centering
\includegraphics[width=1\linewidth,clip=true]{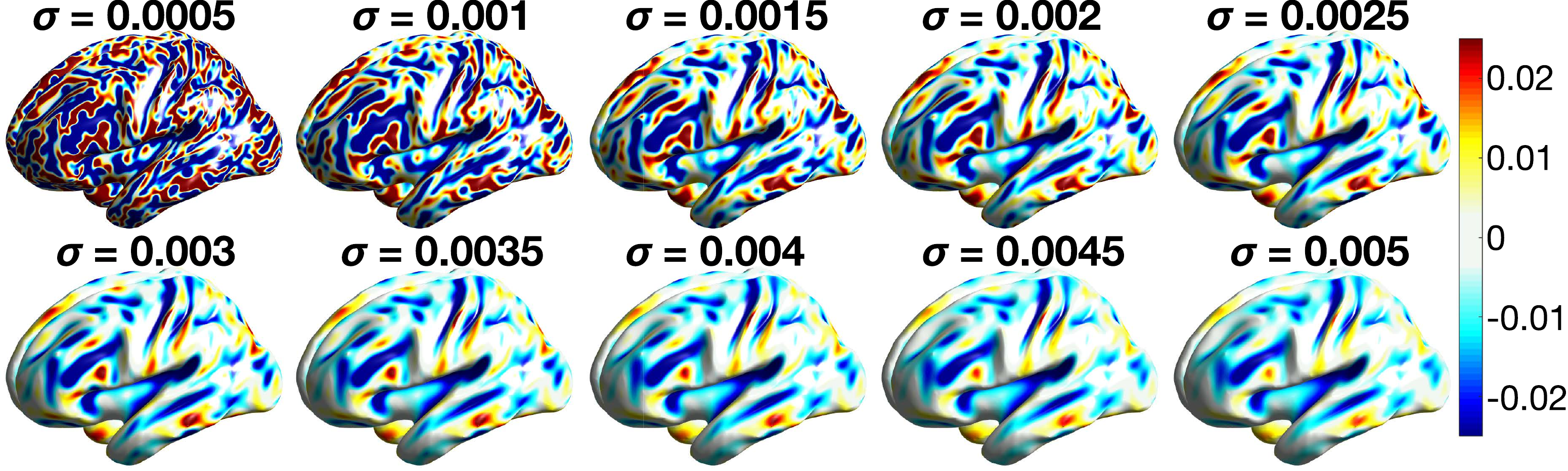}
\caption{Heat diffusion with $\sigma=0.0005, 0.001, \cdots, 0.005$ computed by the iterative convolution of subject 130114. 
At each vertex, 10 diffusion values at different time points are used in constructing the Hotelling's $T^2$-statistic to contrast males and females.}
\label{fig:subj67_Heat}
\end{figure}

\begin{figure}[t]
\centering
\includegraphics[width=1\linewidth,clip=true]{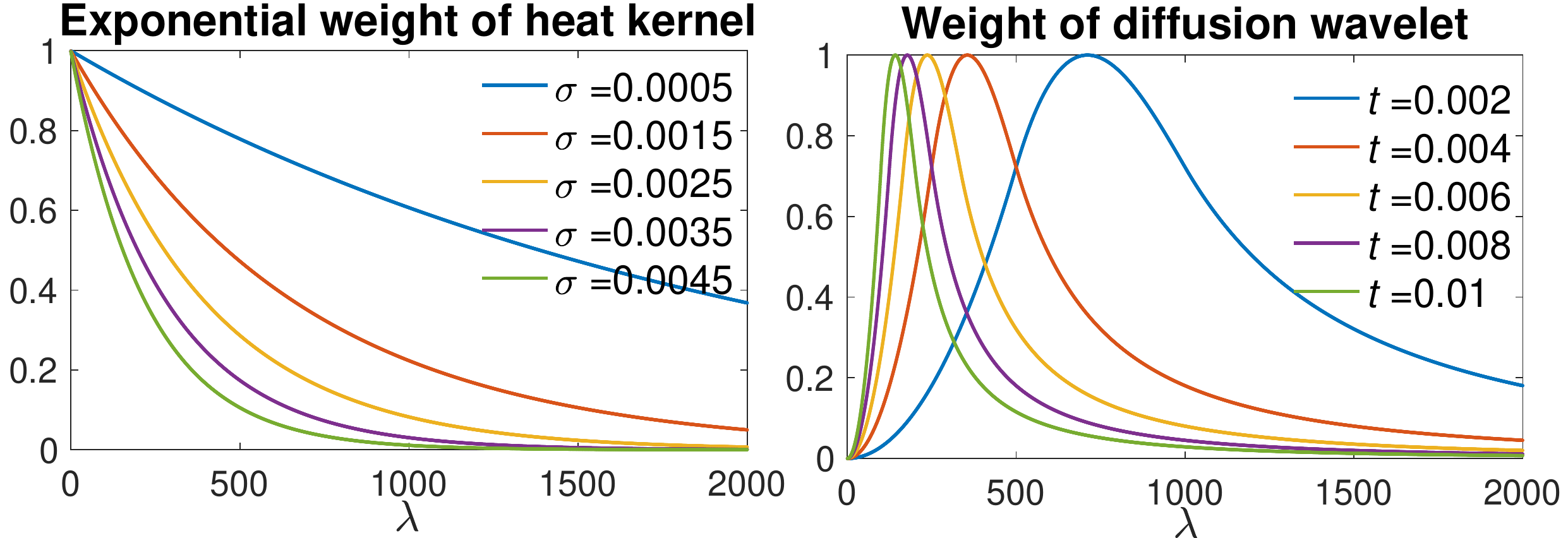}
\caption{Left: the exponential weight $e^{-\lambda\sigma}$ in heat diffusion for diffusion time $\sigma=0.0005$, $0.0015$,...,$0.0045$.
Right: the weight function $g(\lambda t)$ in diffusion wavelet transform for scaling parameter $t=0.002$, $0.004$,..., $0.01$.}
\label{fig:weight}
\end{figure}

\begin{figure}[t]
\centering
\includegraphics[width=1\linewidth,clip=true]{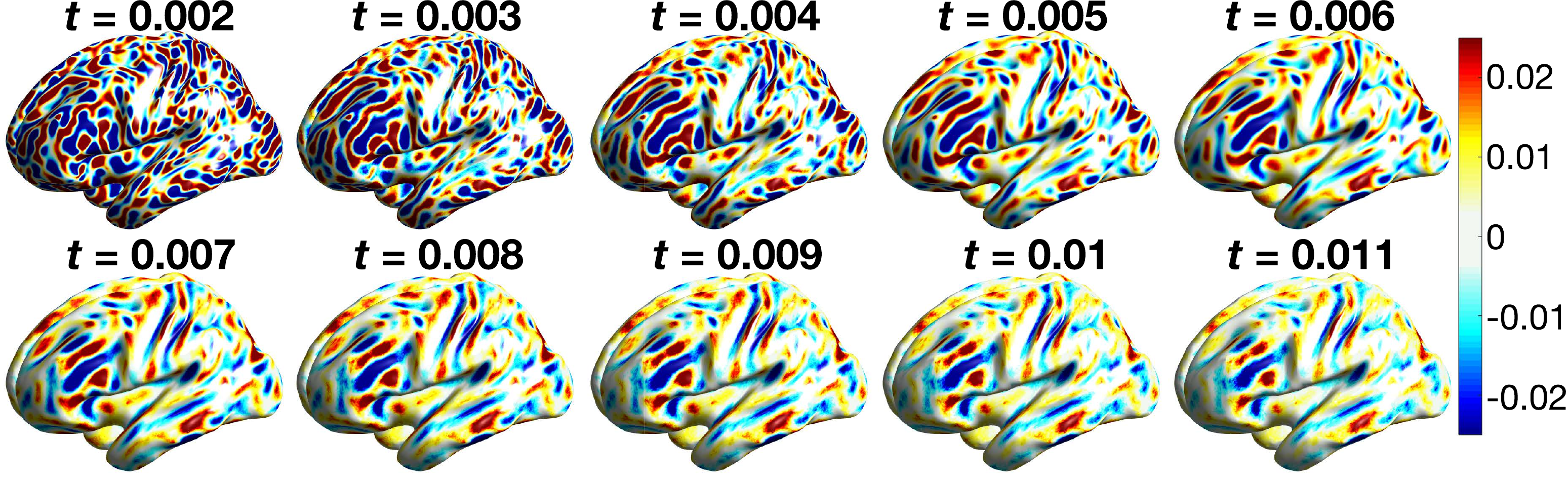}
\caption{Diffusion wavelet transform with scaling parameters $t=0.002,0.003,\cdots, 0.011$ of subject 130114. 
At each vertex, 10 diffusion wavelet transform values at different scales are used in constructing the Hotelling's $T^2$-statistic to contrast males and females.}
\label{fig:subj67_WT}
\end{figure}

\begin{figure}[t]
\centering
\includegraphics[width=1\linewidth,clip=true]{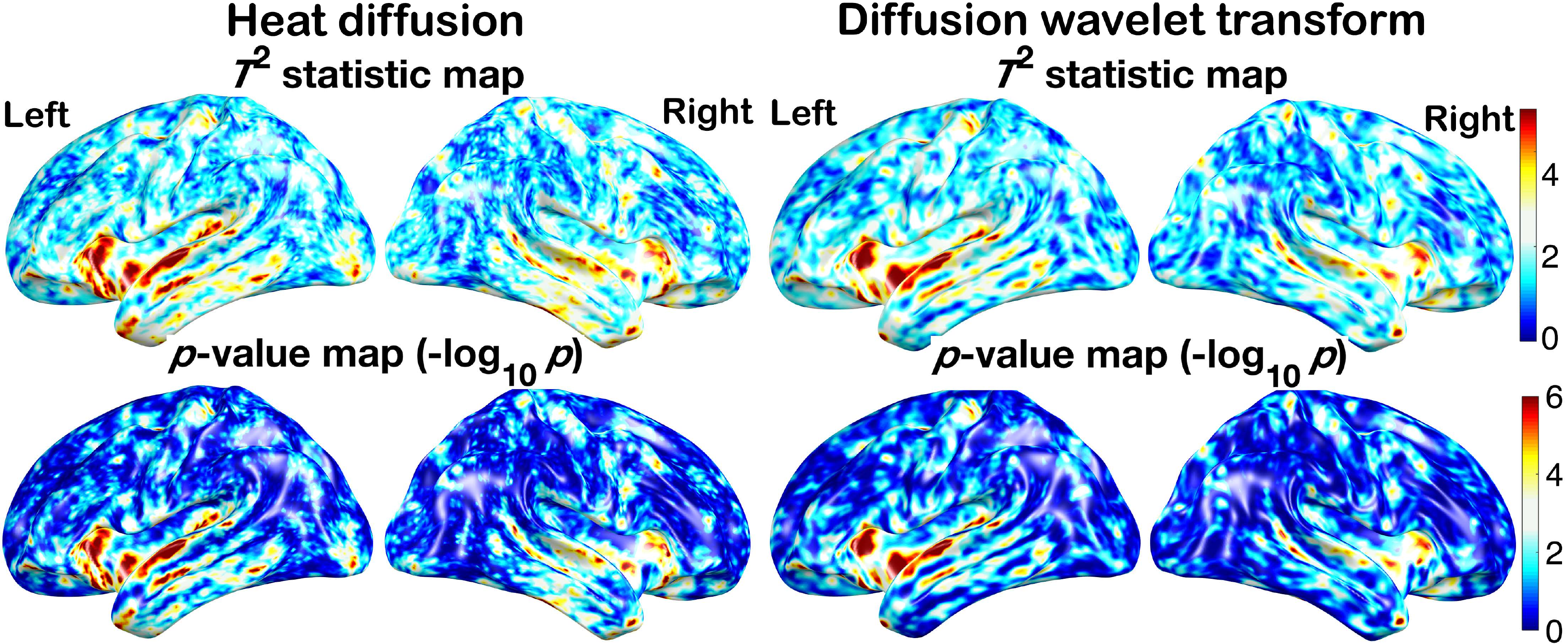}
\caption{Top: Hotelling's $T^2$-statistics of left and right hemispheres of heat diffusion maps with 10 different time points (left) and diffusion wavelet transform with 10 different scales (right). 
The $T^2$-statistic maps of left and right hemispheres showing the
localized sulcal and gyral pattern differences (female - male) thresholded at 4.9 (uncorrected $p$-value of $10^{-6}$).
Bottom: $p$-value maps of left and right hemispheres displayed in logarithmic scale show the significance of the difference at the uncorrected $p$-value of $10^{-6}$.}
\label{fig:HotelT2}
\end{figure}

In this study, we adopted 10 different values of $t=0.002,0.003,\cdots, 0.011$.
Fig. \ref{fig:subj67_WT} shows the flattened diffusion wavelet maps of one representative subject. The values of $t$ were chosen empirically to match the amount of smoothing (FWHM) in the wavelet to the amount of smoothing in heat diffusion.  
Using the two-sample Hotelling's $T^2$-statistic on the multiscale diffusion wavelet features, we also contrasted 268 females and 176 males. Fig. \ref{fig:HotelT2} shows the Hotelling's $T^2$-statistics and the corresponding $p$-values in the log-scale.
The diffusion wavelet transform has $T^2$-statistics in the range of $[0.09,\ 7.6]$ with minimum $p$-value $3.4\times10^{-11}$. For multiple comparisons, any $T^2$-statistic above 2.37  (yellow and red regions)
is considered as significant using the FDR 0.05. Although there are slight differences, the both methods show the similar localization of  sulcal and gyral graph patterns, mainly in the temporal lobe.

The exponential weight in the heat diffusion has only one parameter, i.e., the diffusion time $\sigma$, and leads to the analytic closed-form solutions to the expansion coefficients. The weight function in the diffusion wavelet transform is more complicated, and it may not possible to derive the closed-form expression for the expansion coefficients. 
The simpler weight function in heat kernel and the iterative convolution scheme lead to
faster  computational run time compared to the diffusion wavelets. 
In heat diffusion, we only needed to compute the expansion coefficients for $\sigma=0.0005$ and reused these coefficients in the  iterative convolution to obtain the other nine features. The computation of the 1000 degree expansion coefficients by the proposed closed-form solution costed only $2.6\times10^{-4}$ seconds. In the diffusion wavelet transform, due to the more complicated weight function, the 1000 degree expansion coefficients were computed numerically, which took 1.26 seconds \cite{hammond.2011,kim2012wavelet}.

\begin{figure*}[t]
\centering
\includegraphics[width=1\linewidth,clip=true]{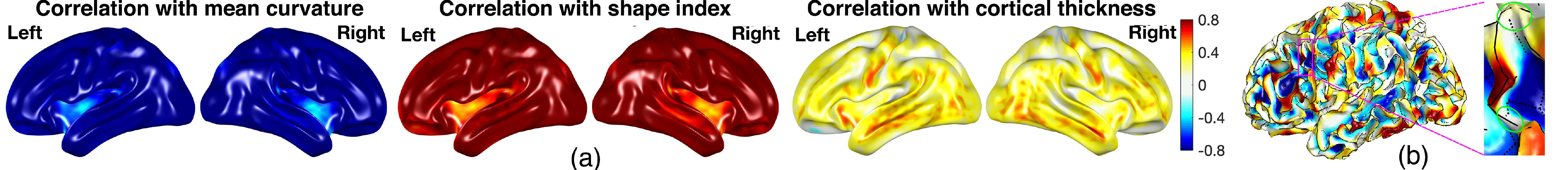}
\caption{(a) Correlations between the diffusion maps ($\sigma=0.001$) of the sulcal/gyral curves and the mean curvature, SI and cortical thickness across all subjects. We observe  strongly negative correlation to the mean curvature  ($-0.77\pm0.12$) and strongly positive correlation to the SI ($0.76\pm0.13$) and high positive correlation to cortical thickness in many regions including the temporal lobe. (b) The green-colored circles in the enlarged region show diffusion map close to 0 due to interwinding complex sulcal (dashed lines) and gyral curves (solid lines).}
\label{fig:corr_curvature}
\end{figure*}

\subsection{Comparing to Other Cortical Folding Features}
We computed the correlations between the diffusion maps  of the sulcal/gyral curves and the mean curvature, SI and cortical thickness across all subjects 
with diffusion time $\sigma=0.001$ \cite{fisher.1915,chung2011mapping} (Fig. \ref{fig:corr_curvature}).
We can observe strongly negative correlation to mean curvature  ($-0.77\pm0.12$) and strongly positive correlation to SI ($0.76\pm0.13$). Although the correlation to the cortical thickness ($0.23\pm0.20$) is not as high  as the correlation to the mean curvature and SI, many regions have correlation value larger than 0.4, especially in the temporal lobe.
Most of regions are all statistically significant after FDR correction at 0.05.

\section{Discussion and Conclusion}
The numerical computation of solving diffusion equations has been thoroughly investigated and mostly solved using the finite element method (FEM) and finite difference method (FDM) on triangulated surface meshes for many decades \cite{andrade.2001, chung.2001.diffusion,chung.2003.cvpr,chung.2004.ISBI,chung.2015.MIA, cachia.TMI.2003,joshi2009parameterization}. It is extremely difficult if not impossible to speed up the computation over existing methods any further. Utilizing the proposed spectral decomposition of heat kernel, we were able to come up with a new numerical scheme that speeds up the computation 8-40 times (depending on the mesh size)  over these existing  methods, which is a significant contribution itself.

Due to the advancement of imaging techniques, we are  beginning to see ever larger meshes. For instance, 
\cite{joshi2009parameterization} smoothed the mean curvature of triangulated cortical surfaces with 1.4 million vertices.
\cite{fan2018tetrahedron} computed the heat flux signature in  a cortical tetrahedral mesh with more than 1.5 million vertices.
\cite{kay2019critical}  used six parallel surfaces between the pial and white surfaces with 5 million vertices in modeling fMRI BOLD activity patterns at  sub-millimeter resolution.
\cite{warner2019high} generated 3D tetrahedral head and cortical surface meshes with 2.7 million vertices to build high-resolution head and brain computer models for  fMRI and EEG. Thus, the increase of computational run time would be of great interest.

In the sulcal and gyral graph pattern analysis, the sulcal and gyral curves were assigned value -1 and 1 respectfully. In the regions of higher diffusion value close to 1, there are more gyral curves than sulcal curves. In the regions of lower diffusion value close to -1, there are likely more sulcal curves than gyral curves. If a group consistently higher diffusion value in a particular region, it indicates there are likely to be more gyral curves in that region. The regions of interwinding complex sulcal and gyral curves will result in diffusion maps close to 0 (Fig. \ref{fig:corr_curvature}). Thus, the statistically significant group differences are not due to  the complexity of interwinding sulcal/gyral patterns but 
the consistent concentration of more gyral or sulcal curves.

We found that the differences are mainly in the temporal lobe, especially in the superior temporal gyrus and sulcus, which is consistent with the literature. \cite{harasty1997language} reported that females  have proportionally larger language areas compared to males, such as the superior temporal cortex and Broca's area. \cite{ochiai2004sulcal} 
reported statistically differences between males and females in  the right superior temporal sulcus and the  most posterior point and center of the left superior temporal sulcus. The significant gender differences in sulcal width and depth were reported in the superior temporal, collateral, and cingulate sulci in \cite{kochunov2005age}. Also, there were significant gender differences in the  cortical area of the left frontal lobe and in the gyrification index of the right temporal lobe \cite{yoon2009effect}. \cite{luders2005mapping} detected higher gray matter concentrations in females in the left posterior superior temporal gyrus and left inferior frontal gyrus.
\cite{crespo2011sex} found significant differences between males and females in the sulcal curvature index of the temporal and  occipital lobes. \cite{lyu2018cortical} reported that females showed higher gyrification in the superior temporal, right inferior frontal, and parieto-occipital sulcal regions. In \cite{sepehrband2018neuroanatomical}, the mean curvature of the left superior temporal sulcus was identified as a highly discriminative feature of sex classification.
The consistent result with previous studies shows that the sulcal/gyral curves are reliable cortical surface features.

Sulcal and gyral curves can be 
interpreted with respect to cortical folding.  
The cortical folding is usually measured by  the mean curvature and SI \cite{pienaar2008methodology,shimony2016comparison}. 
Our result show that the sulcal and gyral curves are almost linearly related to the existing mean curvature and SI.  This is the reason that we got the similar statistical results in all three methods. 
The cortical folding is known to correlate to cortical thickness. 
The gyri are thicker than the sulci \cite{fischl2000measuring,toro2005morphogenetic,wagstyl2015cortical}.
Observing higher diffusion value at a vertex implies that the vertex is closer to gyri than sulci, and thus larger thickness is expected at the vertex.

In the cortical growth and folding development in human fetal brains, many studies have reported changes in surface and shape features such as the curvatures, sulcal depth, gyrification index, sulcal pit based graphs and sulcal skeletons
\cite{dubois2007mapping,habas2011early,clouchoux2012quantitative,wright2014automatic,lefevre2015developmental,im2019sulcal}.
At 25 weeks, the cortical surface is still very smooth \cite{clouchoux2012quantitative}. There are few major sulcal and gyral curves, and most surface vertices will have heat diffusion values close to zero. With increasing gestational age, the cortical folds become more complex with more sulcal and gyral curves and branches, which will likely  result in higher variability in diffusion values across vertices.

The proposed general polynomial approximation of the Laplace-Beltrami (LB) operator works for an arbitrary orthogonal polynomial. The proposed polynomial expansion method speeds up the computation
compared to existing numerical schemes for diffusion equations. Our method avoids various numerical issues associated with the LB-eigenfunction method and FEM based diffusion solvers. The proposed fast and accurate scheme can be further extended to any arbitrary domain without much computational bottlenecks. Thus, the method can be easily applicable to large-scale images where the existing methods may not be applicable without additional computational resources. Beyond the sulcal and gyral graph analysis on 2D surface meshes, the proposed method can be applied to 3D volumetric meshes  \cite{wang2015novel}. 
This is left as a future study.

\bibliographystyle{IEEEtran}
\bibliography{reference.TMI.20200112.bib}

\end{document}